



\documentclass[final,5p,times,twocolumn]{elsarticle}


\usepackage{amssymb}
\usepackage{epstopdf}
\usepackage[english]{babel}

 \usepackage{lineno}
\usepackage{subfigure}

\usepackage{graphicx}
\usepackage{url}
\usepackage{pslatex}
\usepackage{bm}
\usepackage{natbib}
\usepackage{amsmath,amssymb,amsthm}  
\usepackage{paralist}
\usepackage{color}
\usepackage{epstopdf}
\journal{Signal Processing}
\usepackage{color}
\usepackage{subfigure}

\usepackage{graphicx}
\usepackage{url}
\usepackage{pslatex}
\usepackage{bm}
\usepackage{amsmath,amssymb,amsthm}  
\usepackage{paralist}

\usepackage{algorithm}
\usepackage{algorithmic}        
\usepackage{multirow}

\newtheorem{lemma}{Lemma}

\newtheorem{theorem}{Theorem}

\newtheorem{prop}{Proposition}
\newcommand{\e}{\begin{equation}}
\newcommand{\ee}{\end{equation}}
\newcommand{\en}{\begin{equation*}}
\newcommand{\een}{\end{equation*}}
\newcommand{\eqn}{\begin{eqnarray}}
\newcommand{\eeqn}{\end{eqnarray}}
\newcommand{\bmat}{\begin{bmatrix}}
\newcommand{\emat}{\end{bmatrix}}
\newcommand{\BIT}{\begin{itemize}}
\newcommand{\EIT}{\end{itemize}}







\newcommand{\ve}{\bm e}

\newcommand{\vx}{\bm x}
\newcommand{\vy}{\bm y}

\newcommand{\vtheta}{\bm \theta}

\newcommand{\mA}{\bm A}
\newcommand{\mB}{\bm B}

\newcommand{\mI}{\bm I}

\newcommand{\mPhi}{\bm \Phi}
\newcommand{\mPsi}{\bm \Psi}


\newcounter{oursection}

\begin{document}

\begin{frontmatter}



\title{Online Learning Sensing Matrix and Sparsifying Dictionary Simultaneously\\ for Compressive Sensing}


\author[TH]{Tao Hong}
\ead{hongtao@cs.technion.ac.il}
\author[ZZ]{Zhihui Zhu}
\ead{zzhu29@jhu.edu}

	\address[TH]{Department
		of Computer Science, Technion - Israel Institute of Technology, Haifa, 32000, Israel.}
	\address[ZZ]{Center for Imaging Science,
		Johns Hopkins University, Baltimore, MD 21218, USA} 


\begin{abstract}
This paper considers the problem of simultaneously learning the Sensing Matrix and Sparsifying Dictionary (SMSD)  on a large training dataset. To address the formulated joint learning problem, we propose an online algorithm that consists of a closed-form solution for optimizing the sensing matrix with a fixed sparsifying dictionary and a stochastic method for learning the sparsifying dictionary on a large dataset when the sensing matrix is given. Benefiting from training on a large dataset, the obtained compressive sensing (CS) system by the proposed algorithm yields a much better performance in terms of signal recovery accuracy than the existing ones. The simulation results on natural images demonstrate the effectiveness of the suggested online algorithm compared with the existing methods.
\end{abstract}

\begin{keyword}
Compressive sensing \sep sensing matrix design \sep sparsifying dictionary \sep large dataset\sep online learning


\end{keyword}

\end{frontmatter}

\section{Introduction}\label{S_1}
Sparse representation (Sparseland) has led to numerous successful applications spanning through many fields, including image processing, machine learning, pattern recognition, and compressive sensing (CS) \cite{M08} - \cite{CW08}. This model assumes that a signal $\bm x\in \Re^{N}$ can be represented as a linear combination of a few columns, also known as atoms, taken from a matrix $\bm\Psi\in\Re^{N\times L}$ (referred to as a dictionary):
\e
\bm x=\bm\Psi\bm \theta+\bm e,\label{sparse_representation}
\ee
where $\bm\theta\in\Re^{L}$ has few non-zero entries and is the representation coefficient vector of $\bm x$ over the dictionary $\bm \Psi$ and $\bm e\in\Re^N$ is known as the sparse representation error (SRE) which is not nil in general case. The signal $\bm x$ is called  $K$-sparse in $\bm\Psi$ if $\|\bm\theta\|_0\leq K$ where $\|\bm\theta\|_0$ is used to count the number of non-zeros in $\bm\theta$.

The choice of dictionary $\bm\Psi$ depends on specific applications and can be a predefined one, e.g., discrete cosine transform (DCT), wavelet transform and a multiband modulated discrete
prolate spheroidal sequences (DPSS's) dictionary \cite{ZhuWakin2015MDPSS} etc.  It is also beneficial and recently widely-utilized to adaptively learn a dictionary  $\bm \Psi$, called dictionary learning, such that a set of $P$ training signals $\{\bm x_k,k=1,2,\cdots,P\}$ is sparsely represented by optimizing a $\bm\Psi$. There exist many efficient algorithms to learn a dictionary \cite{TF11} and the most two popular methods among them are the method of optimal directions (MOD) \cite{EAH99} and the K-singular value decomposition (KSVD) algorithm \cite{AEB06}. In particular, we prefer to use an over-complete dictionary \cite{AEB06}, $N<L$. 

CS is an emerging framework that enables to exactly recover the signal $\bm x$, in which it is sparse or sparsely represented by a dictionary $\bm\Psi$, from a number of linear measurements that is considerably lower than the size of samples required by the Shannon-Nyquist theorem \cite{CW08}. Generally speaking, researchers tend to utilize a random matrix $\bm \Phi\in \Re^{M\times N}$ as the sensing matrix (a.k.a projection matrix) to obtain the linear measurements
\e
\bm y = \bm \Phi \bm x = \bm \Phi \bm \Psi \bm \theta + \bm\Phi\bm e,
\label{eq:y}\ee
where $M\ll N$. Abundant efforts have been devoted to optimize the sensing matrix with a predefined dictionary resulting in a CS system that outperforms the standard one (random matrix) in various cases~\cite{E07} - \cite{BLLLJC15}.

Recently, researchers realize simultaneously optimizing sensing matrix and dictionary for the CS system yields a higher signal reconstruction accuracy than the classical CS systems which only optimize sensing matrix with a fixed dictionary \cite{DCS09,BLLLJC15}. The main idea underlying in \cite{DCS09,BLLLJC15} is to consider the influence of SRE in learning the dictionary (see Section~\ref{S_3} for the formal problem).   Alternating minimization methods are introduced to jointly design the sensing matrix $\bm\Phi$ and the dictionary $\bm\Psi$ in \cite{DCS09,BLLLJC15}. Compared to \cite{DCS09}, closed-form solutions for updating the sensing matrix and the dictionary are derived in \cite{BLLLJC15} which hence obtains a better performance in terms of signal recovery accuracy. The disadvantage of the method in \cite{BLLLJC15} is that it involves many singular value decompositions (SVDs) making their algorithm inefficient in practice. 

Although the  method for jointly optimizing the sensing matrix and the dictionary in \cite{DCS09,BLLLJC15} works well for a small-scale training dataset (e.g., $N = 64$ and $P=10^4$), it becomes inefficient (and even impractical) if the dimension of the dictionary is high or the size of training dataset is very large (say with more than $10^6$ patches in natural images situation) or for the case involving dynamic data like video stream. It is easy to see that the methods in \cite{DCS09,BLLLJC15} require heavy memory and computations to address such a large scale optimization problem because they have to sweep all of the training data in each dictionary updating procedure. Inspired by \cite{MBPS09,MBPS10}, an \emph{online} algorithm with less complexity and memory is introduced to address the same learning problem shown in \cite{DCS09,BLLLJC15} but on a large dataset.\footnote{In this paper, large or large-scale dataset means this dataset contains a large amount of training data, i.e., $P$ is very large.} { We use a toy example to briefly explain the benefit of training on a large-scale dataset. Assume that the dimension of the dictionary is $64\times 100$ and the number of non-zeros in the sparse vector $\vtheta$ is $4$. Then the number of subspaces in this dictionary attains $\binom{100}{4}\approx 3.9\times 10^6$. Thus, we see such a dictionary provides a rich number of subspaces which motives us to train the dictionary on a large-scale dataset to explore the dictionary to represent the signal of interests better. One can still imagine that along with the increase of the dimension of the dictionary, the number of subspaces will become much richer and we can expect such a dictionary may yield many interesting properties. Indeed, the benefit of learning a dictionary on a large dataset or a high dimension (without training the sensing matrix) has been experimentally demonstrated in \cite{Jere16,Jere16SPL,RHGZ16}. Moreover, the simulations shown in this paper also indicate the merit of learning the CS system (both the dictionary and the sensing matrix) on a large-scale dataset.}

%

Note that, in each step, the sensing matrix is either updated with an iterative algorithm in \cite{DCS09} or an alternating-minimization method\footnote{Though each step has a closed-form solution, it requires one SVD in each iteration.} in \cite{BLLLJC15}, both requiring many times of SVDs. To overcome this issue, we suggest an efficient method to optimize the sensing matrix which is robust to the SRE. The proposed method is inspired by the recent results in \cite{LLLBJH15,HBLZ16,TZ16} for robust sensing matrices, but it differs from these works in which there is no need to tune the trade-off parameter and hence it is more suitable for online learning and dynamic data. The experiments on natural images demonstrate that jointly optimizing the Sensing Matrix and Sparsifying Dictionary (SMSD) on a large dataset has much better performance in terms of signal recovery accuracy than with the ones shown in \cite{DCS09,BLLLJC15}. {Notice that in this paper we want to design a CS system for the applications where the SRE exists in which is the case for the natural images.} 

The rest of this paper is organized as follows.  In Section \ref{S_2}, a novel model is proposed to design the sensing matrix to reduce the coherence\footnote{The coherence between two vectors $\bm a, \bm b \in\Re^{M}$ is defined as $\frac{\bm a^{\cal T}\bm b}{\|\bm a\|_2 \|\bm b\|_2}$.}  between each two columns in $\bm\Phi\bm\Psi$  and overcome the influence of SRE. Moreover, a closed-form solution is derived to obtain the optimized sensing matrix which is parameter free and then more suitable for the following joint learning SMSD method. A joint optimization algorithm for learning SMSD on a large dataset is suggested in Section \ref{S_3}. For learning the sparsifying dictionary on a large dataset efficiently, an online method is introduced to consider such a large training data.\footnote{Actually, the training data is only involved in \eqref{dictionary_involve_projection}. So the developed online algorithm is only for updating dictionary. For brevity, we call the whole joint algorithm as online SMSD.} Some experiments on natural images are carried out in Section \ref{S_4} to demonstrate the effectiveness of the proposed algorithm and the advantage of training on a large dataset comparing with other methods. Conclusion and future work are given in Section \ref{S_5}.

\section{An Efficient Method for Robust Sensing Matrix Design}\label{S_2}
In this section, we present an efficient method to design a robust\footnote{Following the terminology used in \cite{LLLBJH15,TZ16}, a robust sensing matrix refers to a sensing matrix who yields robust performance for signals whether exist SRE, $\bm e\neq 0$.} sensing matrix. To begin, we note that one of the major purposes in optimizing the sensing matrix is to reduce the coherence between each two columns of the equivalent dictionary $\bm\Phi\bm\Psi$.
{This leads to the work \cite{E07,LZYCB13} which demonstrates that the optimized sensing matrix such that the equivalent dictionary with small mutual coherence
yields much better performance than the one with a random sensing matrix for the exactly sparse signals ({i.e.}, $\bm e = \bm 0$ for the signal model~\eqref{sparse_representation}). See also \cite{LH14,MM17} for directly minimizing the mutual coherence of the equivalent dictionary. However, it was recently realized~\cite{LLLBJH15,HBLZ16,TZ16} that such a sensing matrix is not robust to SRE and thus the corresponding CS system results in poor performance in practice, like sampling the natural images, where the SRE exists even when we represent the images with a well designed dictionary. Alternatively, the average mutual coherence (i.e., the coherence on a least square metric instead of the infinity norm) rather than the exact mutual coherence is suggested in \cite{LLLBJH15,HBLZ16} for designing an optimal robust sensing matrix. Specifically, a robust sensing matrix is attained by solving \cite{LLLBJH15,HBLZ16}:
\e
\min_{\bm\Phi}~\|\bm I_L-\bm\Psi^\mathcal T\bm \Phi^\mathcal T\bm\Phi\bm\Psi\|_F^2 + \lambda\|\bm\Phi\bm E\|_F^2\label{Optimized:Projection}
\ee
where $\|\cdot\|_F$ denotes the Frobenius norm and $\bm I_L$ represents an identity matrix with dimension $L$.\footnote{MATLAB notations are adopted in this letter. In this connection, for a vector, $\bm v(k)$ denotes the $k$-th component of $\bm v$. For a matrix, $\bm Q(i,j)$ means the $(i,j)$-th element of matrix $\bm Q$, while $\bm Q(k,:)$ and $\bm Q(:,k)$ indicate the $k$-th row and column vector of $\bm Q$, respectively. $\bm E(:,i)=\bm e_i,~i=1,\cdots,P$, and $\lambda$ is a trade-off parameter that balance the coherence of the equivalent dictionary and the robustness of the sensing matrix to the SRE.}  According to the recent result shown in \cite{TZ16}, it suggests replacing the penalty $\|\bm\Phi\bm E\|_F^2$ by $\|\bm\Phi\|_F^2$ (which is independent of the training data) since  $\|\bm\Phi\|_F^2$ has the same effectiveness as $\|\bm\Phi\bm E\|_F^2$ when the SRE is modelled as the Gaussian noise and $P\rightarrow \infty$. Thus, the robust sensing matrix is developed via addressing  \cite{TZ16}: 
\e
\min_{\bm\Phi}~f(\bm\Phi) = \|\bm I_L-\bm\Psi^\mathcal T\bm \Phi^\mathcal T\bm\Phi\bm\Psi\|_F^2+\lambda\|\bm\Phi\|_F^2\label{robust_projection_lambda}
\ee

Numerical experiments with natural images show that the optimized sensing matrix through solving \eqref{robust_projection_lambda} with a well-chosen $\lambda$ yields state-of-the-art performance in CS-based image compression \cite{TZ16}.} However, we note that it is nontrivial to choose an optimal $\lambda$ for \eqref{robust_projection_lambda} since the two terms $\|\bm I_L-\bm\Psi^\mathcal T\bm \Phi^\mathcal T\bm\Phi\bm\Psi\|_F^2$ and $\|\bm\Phi\|_F^2$ have different physical meanings: the formal represents the average mutual coherence of the equivalent dictionary $\bm \Phi\bm \Psi$, while the later is the energy of the sensing matrix $\bm \Phi$. For off-line applications when the dictionary is fixed, it is suggested to choose a $\lambda$ by searching a given range and looking at the performance of the resulted sensing matrices \cite{LLLBJH15,HBLZ16,TZ16} on the testing dataset. However, this strategy becomes very inefficient for online applications when the dictionary $\bm\Psi$ is evolving which belongs to the case in this paper. To avoid tuning the parameter $\lambda$, we suggest designing the robust sensing matrix with the following two steps: find a set of solutions which minimize $\|\bm I_L-\bm\Psi^\mathcal T\bm \Phi^\mathcal T\bm\Phi\bm\Psi\|_F^2$ (i.e., solve \eqref{robust_projection_lambda} without the term $\|\bm \Phi\|_F^2$), and then locate a $\bm\Phi$ among these solutions that has smallest energy. Thus, we consider the following optimization problem to design the sensing matrix which is slightly different from \eqref{robust_projection_lambda}:
\e
\left.\begin{array}{rl}
	\min\limits_{\bm\Phi\in\mathcal S}&\|\bm\Phi\|_F^2\\
	{\mathcal S =}&\arg\min\limits_{\tilde{\bm\Phi}\in\Re^{M\times N}}~~ g(\tilde{\bm\Phi})=\|\bm I_L-\bm\Psi^\mathcal T\tilde{\bm\Phi}^\mathcal T\tilde{\bm\Phi}\bm\Psi\|_F^2
\end{array}\right.\label{robust_projection_free_lambda}
\ee
Let ${\mathcal U}_{M,\bar N}:= \left\{\bm U_{M,\bar N}:\bm U_{M,\bar N}^{\cal T}\bm U_{M,\bar N} = \bm I_{\bar N}\right\}$ denote the set of $M\times \bar N$ orthonormal matrices for $\bar N\leq M$. When $\bar N = M$, to simplify the notation, we use $\mathcal U_{M}$ to denote the set of $M\times M$ orthonormal matrices. The following result establishes a set of closed-form solutions for \eqref{robust_projection_free_lambda}:
\begin{theorem}\label{lemma_1}
	Let $\bm\Psi=\bm U_{\bm\Psi}\bm \Lambda\bm V_{\bm\Psi}^\mathcal T$ be an SVD of $\bm\Psi$, where $\text{Rank}(\bm\Psi)=\bar N\leq N$, $\bm\Lambda=\text{diag}(\lambda_1,\lambda_2,\cdots,\lambda_{\bar N})>0$ with $\lambda_1\geq\lambda_2\geq\cdots\geq\lambda_{\bar N}$, and $\bm U_{\bm \Psi}$ and $\bm V_{\bm \Psi}$ are $N\times \bar N$ and $L\times \bar N$ orthonormal matrices, respectively. When $\bar N\geq M$, a set of optimal solutions for \eqref{robust_projection_free_lambda} is specified by
	\e
	{\cal W}_1:=\left\{\bm\Phi: \bm\Phi= \bmat \bm U_M & \bm 0\emat \bm\Lambda^{-1}\bm U_{\bm \Psi}^\mathcal T, \bm U_M\in {\cal U}_M\right\}
	\label{optimal_robust_projection_free_lambda}
	\ee
	On the other hand, when $\bar N< M$,  a set of optimal solutions for \eqref{robust_projection_free_lambda} is specified by
	\e
	{\cal W}_2:=\left\{\bm\Phi: \bm\Phi= \bm U_{M,\bar N} \bm\Lambda^{-1}\bm U_{\bm \Psi}^\mathcal T, \bm U_{M,\bar N}\in\mathcal U_{M\times \bar N}\right\}
	\label{optimal_robust_projection_free_lambda 2}
	\ee
\end{theorem}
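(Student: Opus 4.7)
The plan is to diagonalize the problem via the SVD of $\bm\Psi$ and reduce the lexicographic minimization in~\eqref{robust_projection_free_lambda} to two decoupled sub-problems on a smaller matrix. I would first set $\bm\Theta := \bm\Phi\bm U_{\bm\Psi}\in\Re^{M\times\bar N}$ and extend $\bm U_{\bm\Psi}$ to a full orthonormal basis $[\bm U_{\bm\Psi},\bm U_{\bm\Psi}^\perp]$ of $\Re^N$, so that any $\bm\Phi$ decomposes Frobenius-orthogonally as
$$\bm\Phi = \bm\Theta\bm U_{\bm\Psi}^\mathcal T + \bm\Omega, \quad \bm\Omega\bm U_{\bm\Psi}=\bm 0,$$
giving $\|\bm\Phi\|_F^2 = \|\bm\Theta\|_F^2 + \|\bm\Omega\|_F^2$. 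Extending $\bm V_{\bm\Psi}$ similarly to a basis of $\Re^L$ and introducing $\bm H := \bm\Theta\bm\Lambda$ then lets one rewrite
$$g(\bm\Phi) = \|\bm I_{\bar N} - \bm H^\mathcal T\bm H\|_F^2 + (L-\bar N).$$
Because $g$ is independent of $\bm\Omega$, any $\|\bm\Phi\|_F^2$-optimal element of $\mathcal S$ must have $\bm\Omega=\bm 0$, so the whole problem reduces to minimizing $\|\bm H\bm\Lambda^{-1}\|_F^2 = \Tr(\bm\Lambda^{-2}\bm H^\mathcal T\bm H)$ over those $\bm H\in\Re^{M\times\bar N}$ which minimize $\|\bm I_{\bar N}-\bm H^\mathcal T\bm H\|_F^2$, and then reading off $\bm\Phi = \bm H\bm\Lambda^{-1}\bm U_{\bm\Psi}^\mathcal T$.

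The regime $\bar N<M$ is immediate: since $\bm H$ is tall, $\bm H^\mathcal T\bm H=\bm I_{\bar N}$ is achievable, so the inner minimum is attained exactly on $\mathcal U_{M,\bar N}$, and on this Stiefel manifold $\Tr(\bm\Lambda^{-2}\bm H^\mathcal T\bm H)=\Tr(\bm\Lambda^{-2})$ is constant; every $\bm H\in\mathcal U_{M,\bar N}$ is therefore optimal, recovering $\mathcal W_2$.

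The regime $\bar N\geq M$ is where the real work sits. A thin SVD of $\bm H$ gives $\|\bm I_{\bar N}-\bm H^\mathcal T\bm H\|_F^2 = \sum_{i=1}^{M}(1-\sigma_i^2(\bm H))^2 + (\bar N-M)$, minimized iff all $M$ singular values equal one, i.e.\ $\bm H\bm H^\mathcal T=\bm I_M$, in which case $\bm P:=\bm H^\mathcal T\bm H$ is a rank-$M$ orthogonal projection in $\Re^{\bar N\times\bar N}$. It remains to minimize $\Tr(\bm\Lambda^{-2}\bm P)$ over such projections, for which I would invoke the Poincar\'e/Ky~Fan trace inequality: for any symmetric $\bm A$ with eigenvalues $\mu_1\leq\cdots\leq\mu_{\bar N}$, the minimum of $\Tr(\bm A\bm P)$ over rank-$M$ orthogonal projections equals $\sum_{i=1}^{M}\mu_i$, with equality iff the range of $\bm P$ coincides with the span of the eigenvectors for the $M$ smallest eigenvalues. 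Since $\bm\Lambda^{-2}$ is diagonal with entries $1/\lambda_1^2\leq\cdots\leq 1/\lambda_{\bar N}^2$, the optimal range is $\mathrm{span}(\bm e_1,\ldots,\bm e_M)$, forcing $\bm H=[\bm U_M,\bm 0]$ for an arbitrary $\bm U_M\in\mathcal U_M$ and, after substitution, yielding $\mathcal W_1$. The main obstacle I anticipate is the bookkeeping in the initial SVD-based reduction—keeping the two orthogonal extensions straight and verifying that the cross terms vanish; once those are settled, each remaining step is a short calculation or an invocation of a classical trace inequality.
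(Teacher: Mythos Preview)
Your proposal is correct and follows essentially the same route as the paper: reduce via the SVD of $\bm\Psi$ to the problem $\min\|\bm I_{\bar N}-\bm H^\mathcal T\bm H\|_F^2$, split into the two rank regimes, and then among the inner minimizers pick those of least energy. Two small differences are worth noting. First, you explicitly decompose $\bm\Phi=\bm\Theta\bm U_{\bm\Psi}^\mathcal T+\bm\Omega$ and argue $\bm\Omega=\bm 0$ from energy considerations; the paper instead writes $\mathcal S$ directly as $\{\bm U_{\bar N,M}^\mathcal T\bm\Lambda^{-1}\bm U_{\bm\Psi}^\mathcal T\}$, tacitly ignoring the $\bm\Omega$-component---your treatment is tidier here. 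Second, for the outer energy minimization in the case $\bar N\geq M$, the paper writes $\|\bm\Phi\|_F^2=\sum_{i=1}^{\bar N}\alpha_i/\lambda_i^2$ with $\alpha_i$ the diagonal entries of $\bm U_{\bar N,M}\bm U_{\bar N,M}^\mathcal T$ (so $0\le\alpha_i\le 1$, $\sum_i\alpha_i=M$) and solves the resulting linear program by inspection, whereas you invoke the Ky~Fan/Poincar\'e trace inequality on rank-$M$ projections; these are the same argument at different levels of abstraction. Your remark about the ``main obstacle'' being the cross-term bookkeeping is accurate but minor: once you write $\bm\Psi^\mathcal T\bm\Phi^\mathcal T\bm\Phi\bm\Psi=\bm V_{\bm\Psi}\bm H^\mathcal T\bm H\bm V_{\bm\Psi}^\mathcal T$ and extend $\bm V_{\bm\Psi}$ to an orthonormal basis of $\Re^L$, the $(L-\bar N)$ constant drops out immediately.
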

\begin{proof}
	We first rewrite $g({\bm \Phi})$ as
	\begin{align*}
	g(\bm \Phi) &= \|\bm I_L- \bm V_{\bm\Psi}\bm \Lambda\bm U_{\bm\Psi}^\mathcal T\bm\Phi^\mathcal T\bm\Phi \bm U_{\bm\Psi} \bm \Lambda\bm V_{\bm\Psi}^\mathcal T \|_F^2\\
	&= \|\bm I_{\bar N}- \bm \Lambda\bm U_{\bm\Psi}^\mathcal T\bm\Phi^\mathcal T\bm\Phi \bm U_{\bm\Psi} \bm \Lambda \|_F^2 + L - \bar N
	\end{align*}
	Thus, minimizing $g(\bm \Phi)$ is equivalent to
	\e
	\min_{\bm\Phi} h(\bm \Phi) = \|\bm I_{\bar N}- \bm \Lambda\bm U_{\bm\Psi}^\mathcal T\bm\Phi^\mathcal T\bm\Phi \bm U_{\bm\Psi} \bm \Lambda \|_F^2.
	\label{eq:h}\ee
	We proceed by considering the following two cases.
	
	case I: $ \bar N\geq M$. Noting that $\text{rank}(\bm \Lambda\bm U_{\bm\Psi}^\mathcal T\bm\Phi^\mathcal T\bm\Phi \bm U_{\bm\Psi} \bm \Lambda) = M \leq \bar N$ and utilizing the Eckart-Young-Mirsky theorem~\cite{RC13}, we have that $h(\bm \Phi)\geq \bar N - M$
	and it achieves its minimum when\footnote{One can check that $\|\bm I_{\bar N} - \bm U_{\bar N,M}\bm U_{\bar N,M}^{\cal T}\|_F^2 = \bar N -  \text{Tr}(\bm U_{\bar N,M}\bm U_{\bar N,M}^{\cal T}) = \bar N - M$.}
	\[
	\bm \Lambda\bm U_{\bm\Psi}^\mathcal T\bm\Phi^\mathcal T\bm\Phi \bm U_{\bm\Psi} \bm \Lambda = \bm U_{\bar N,M}\bm U_{\bar N,M}^{\cal T},
	\]
	where $\bm U_{\bar N,M}$ is an arbitrary $\bar N\times M$ orthonormal matrix.
	The set of $\bm \Phi$ that satisfies the above equation is given by
	\e
	\bm \Phi \in \mathcal S =\left\{ \bm U_{\bar N,M}^{\cal T} \bm\Lambda^{-1}\bm U_{\bm \Psi}^\mathcal T: \bm U_{\bar N,M}^{\cal T} \bm U_{\bar N,M} = \bm I_M \right\}.
	\label{eq:def S 1}\ee
	With this, we turn to find $\bm \Phi$ such that it has the smallest $\|\bm \Phi\|_F^2$. To that end, we rewrite
	$$
	\|\bm \Phi\|_F^2 =\text{Tr}\left(\bm\Phi^\mathcal T\bm\Phi\right)=\text{Tr}\left(\bm U_{\bar N,M}\bm U_{\bar N,M}^{\cal T} \bm\Lambda^{-2}  \right)=\sum_{i=1}^{\bar N} \frac{\alpha_i}{\lambda_i^2}\\
	$$
	where $\alpha_i$ is the $i$-th diagonal of $\bm U_{\bar N,M}\bm U_{\bar N,M}^{\cal T}$. It is clear that $0\leq \alpha_i\leq 1$ and $\sum_{i=1}^{\bar N} \alpha_i = M$ since $\bm U_{\bar N,M}$ is an $\bar N\times M$ orthonormal matrix. Therefore, we have
	$
	\sum_{i=1}^{\bar N} \frac{\alpha_i}{\lambda_i^2} \geq \sum_{i=1}^{M}\frac{1}{\lambda_i^2}
	$
	and it achieves the minimum value when $\alpha_1 = \cdots = \alpha_M = 1$ and $\alpha_{M+1} = \cdots = \alpha_{\bar N} = 0$. The last condition implies that $\bm U_{\bar N,M} = \bmat \bm U_M\\ \bm 0\emat$ with $\bm U_M$ is an arbitrary  $M\times M$ orthonormal matrix.
	
	case II: $ \bar N< M$. We first note that $h(\bm \Phi)\geq 0$
	and it achieves its minimum when
	\[
	\bm \Phi \in \mathcal S =\left\{ \bm U_{M,\bar N} \bm\Lambda^{-1}\bm U_{\bm \Psi}^\mathcal T: \bm U_{M,\bar N}^{\cal T} \bm U_{\bar N,M} = \bm I_{\bar N} \right\}.
	\]
	where $\bm U_{M,\bar N}$ is an arbitrary $M\times \bar N$ orthonormal matrix. For such $\bm \Phi$, we have
	$$
	\|\bm \Phi\|_F^2 =\text{Tr}\left(\bm\Phi^\mathcal T\bm\Phi\right)=\text{Tr}\left(\bm\Lambda^{-2} \right)= \sum_{i=1}^{\bar N} \frac{1}{\lambda_i^2}
	$$
	which implies that each $\bm \Phi$ has the same energy. 
\end{proof}

We remark that \cite[Theorem 2]{LZYCB13} also gives the set of optimal solutions that minimizes $g(\bm \Phi)$ for the case $\text{Rank}(\bm\Psi)=\bar N \geq M$, i.e., $\overline {\cal S}=\left\{\bmat \bm U_{M}&\bm 0 \emat \bm \Lambda ^{-1}\bm U_{\bm \Psi}^\mathcal T
,\bm U_{M}\in {\cal U}_M\right\}.
$
By noting that $\bmat \bm U_{M}^\mathcal T\\\bm 0 \emat \in {\cal U}_{\bar N,M}$, it is clear that ${\cal S} = \overline{\cal S}$ where $\cal S$ is given in \eqref{eq:def S 1}.

When $\text{Rank}(\bm\Psi)=\bar N \geq M$ (which is true for most of applications),
\eqref{optimal_robust_projection_free_lambda} gives a set of optimal solutions for \eqref{robust_projection_free_lambda} and implies that there exists some degrees of freedom to choosing $\bm U_M$. The following result investigates the performance of the sensing matrices with different $\bm U_M$.
\begin{lemma}
	Compressive sensing systems with the same dictionary $\bm \Psi$  and different $\bm \Phi\in \mathcal W_1$ (which is defined in \eqref{optimal_robust_projection_free_lambda}) have the same performance.
	\label{lem:equivalence of Phi}
\end{lemma}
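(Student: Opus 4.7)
The plan is to show that any two sensing matrices in $\mathcal{W}_1$ differ only by a left multiplication by an $M\times M$ orthogonal matrix, and that standard sparse recovery algorithms are invariant under such an orthogonal transformation of the measurement vector.

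\textbf{Step 1: Relating two sensing matrices in $\mathcal{W}_1$.} Given any $\bm\Phi_1, \bm\Phi_2 \in \mathcal{W}_1$ corresponding to orthogonal matrices $\bm U_M^{(1)}, \bm U_M^{(2)} \in \mathcal{U}_M$, I would define $\bm Q := \bm U_M^{(2)} (\bm U_M^{(1)})^{\mathcal T}$ and observe that $\bm Q \in \mathcal{U}_M$. A direct computation from the parametrization in~\eqref{optimal_robust_projection_free_lambda} then yields $\bm\Phi_2 = \bm Q \bm\Phi_1$, so the two equivalent dictionaries satisfy $\bm\Phi_2 \bm\Psi = \bm Q \bm\Phi_1 \bm\Psi$.

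\textbf{Step 2: Relating the two measurement vectors.} For any signal $\bm x$ modelled as in~\eqref{sparse_representation}, the measurements from the two CS systems satisfy $\bm y_2 = \bm\Phi_2 \bm x = \bm Q \bm\Phi_1 \bm x = \bm Q \bm y_1$.

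\textbf{Step 3: Invariance of sparse recovery under orthogonal transformations of the measurements.} Sparse recovery algorithms (e.g., BP, OMP, IHT) seek $\bm\theta$ minimizing a sparsity-promoting objective subject to a data-fidelity constraint of the form $\|\bm y - \bm\Phi\bm\Psi\bm\theta\|_2 \leq \epsilon$ (or with equality when $\bm e = \bm 0$). Since $\bm Q$ is orthogonal,
\begin{equation*}
\|\bm y_2 - \bm\Phi_2 \bm\Psi \bm\theta\|_2 = \|\bm Q(\bm y_1 - \bm\Phi_1 \bm\Psi \bm\theta)\|_2 = \|\bm y_1 - \bm\Phi_1 \bm\Psi \bm\theta\|_2,
\end{equation*}
so the feasible sets of the two recovery problems coincide and the two optimization problems are identical. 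Hence both systems produce the same estimate $\widehat{\bm\theta}$, and consequently the same reconstructed signal $\widehat{\bm x} = \bm\Psi\widehat{\bm\theta}$. For greedy algorithms like OMP, I would additionally note that the inner products used to select atoms satisfy $(\bm\Phi_2\bm\Psi)^{\mathcal T}\bm y_2 = (\bm\Phi_1\bm\Psi)^{\mathcal T} \bm Q^{\mathcal T}\bm Q \bm y_1 = (\bm\Phi_1\bm\Psi)^{\mathcal T}\bm y_1$, so every iteration selects the same atom and produces the same residual.

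The main obstacle, if any, is just being explicit about what ``performance'' means: the statement is only meaningful once we fix a signal recovery procedure, and we must verify that the procedure only depends on $\bm y$ and $\bm\Phi\bm\Psi$ through the residual $\bm y - \bm\Phi\bm\Psi\bm\theta$ in a norm invariant under orthogonal transformations. Since all commonly used CS reconstruction algorithms have this property, the conclusion follows immediately from Steps 1--3.
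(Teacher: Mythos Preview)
Your proposal is correct and follows essentially the same approach as the paper: both arguments observe that any two matrices in $\mathcal W_1$ are related by left multiplication by an orthogonal $M\times M$ matrix, and then use the $\ell_2$-invariance of the residual $\|\bm y-\bm\Phi\bm\Psi\bm\theta\|_2$ under that transformation to conclude the recovery problems coincide. Your version is somewhat more thorough in that it explicitly names the orthogonal relator $\bm Q$, discusses several recovery algorithms (BP, OMP, IHT) rather than only the $\ell_0$-constrained least-squares problem the paper uses, and flags the need to fix what ``performance'' means---but the underlying idea is identical.
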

\begin{proof}
	Suppose we have two compressive sensing systems with the same dictionary $\bm \Psi$ and the sensing matrices $\bm \Phi = \bmat \bm U_M & \bm 0\emat \bm\Lambda^{-1}\bm U_{\bm \Psi}^\mathcal T$ and $\overline{\bm \Phi} = \bmat \overline{\bm U}_M & \bm 0\emat \bm\Lambda^{-1}\bm U_{\bm \Psi}^\mathcal T$, respectively, where $\bm U_M, \overline {\bm U}_M \in {\cal U}_M$.
	For any $\bm x\in\Re^N$, the two CS systems obtain the measurements as $\bm y = \bm \Phi \bm x, \ \overline {\bm y} = \overline{\bm \Phi} \bm x$
	and respectively attempt to recover $\bm x$ via
	\[
	\min_{\theta}\|\bm y - \bm \Phi\bm \Psi \bm \theta\|^2, ~~\text{s.t.}~~\|\bm\theta\|_0\leq K
	\]
	and
	\[
	\min_{\theta}\|\overline{\bm y} - \overline{\bm \Phi}\bm \Psi \bm \theta\|^2, ~~\text{s.t.}~~\|\bm\theta\|_0\leq K
	\]
	The proof is completed by noting that the above two equations have the same solution since
	$$
	\|\bm y - \bm \Phi\bm \Psi \bm \theta\|^2= \|\overline{\bm U}_M\bm U_M^{\cal T}\left(\bm y - \bm \Phi\bm \Psi \bm \theta\right)\|^2= \|\overline{\bm y} - \overline{\bm \Phi}\bm \Psi \bm \theta\|^2
	$$
\end{proof}
{\bf Lemma \ref{lem:equivalence of Phi}} implies that we can choose a $\bm \Phi$ in \eqref{optimal_robust_projection_free_lambda} with $\bm U_M$ as an identity matrix and it has the same performance as other $\bm \Phi\in{\cal W}_1$. Moreover, by choosing an identity matrix for $\bm U_M$, we can save computations in learning the dictionary and the solution for \eqref{robust_projection_free_lambda} becomes
\e
\bm\Phi = \phi(\bm\Psi) \triangleq \bm\Lambda_M^{-1} \bm U_{\bm \Psi}\left(:,1:M\right)^\mathcal T\label{solution_robust_projection_free_lambda}
\ee
where $\bm\Lambda_M=\bm\Lambda(1:M,1:M)$. Clearly, $\bm\Lambda_M$ is a diagonal matrix and the calculation of its inverse is cheap. In effect, one time SVD of the dictionary $\bm\Psi$ dominates the main complexity in the sensing matrix updating procedure.\footnote{In effect, we only needs the previous largest $M$ singular values and the corresponding left orthogonal matrices. So the computation can be reduced further by utilizing power method.} Compared with the methods shown in \cite{DCS09,BLLLJC15} which need to perform the eigenvalue decomposition or SVD many times, our proposed method already saves significant computations.

{
We end this section by comparing \eqref{solution_robust_projection_free_lambda} with gradient descent solving \eqref{robust_projection_lambda} in terms of the computational complexity, though we note that our main purpose to use \eqref{solution_robust_projection_free_lambda} is to avoid tuning the parameter $\lambda$ in \eqref{robust_projection_lambda}.
The gradient of $f(\mPhi)$ is given as follows
\[
\nabla_{\bm \Phi} f(\bm \Phi)=2\lambda\bm\Phi-4\bm \Phi\bm\Psi\bm\Psi^\mathcal T+4\bm \Phi\bm\Psi\bm\Psi^\mathcal T\bm \Phi^\mathcal T \bm \Phi\bm\Psi\bm\Psi^\mathcal T.
\]
Suppose $\bm \Psi \bm\Psi^\mathcal T$ is precomputed and then evaluating the gradient $\nabla_{\bm \Phi} f(\bm \Phi)$ requires $O(MN^2)$ computations. Thus, the gradient descent has at least $O(MN^2)$ computational complexity, though we are not ensured\footnote{Though \eqref{robust_projection_lambda} is nonconvex, the recent work on low-rank optimization \cite{Zhu2017} indicates gradient descent can converge to the global solution for a set of low-rank optimizations. The convergence is also experimentally verified for \eqref{robust_projection_lambda} in \cite{TZ16}, though there is no theoretical guarantee about the convergence rate (i.e., how fast it converges to the global solution).
} how fast the gradient descent converges. As indicated by (9), our closed-form solution only needs to compute the first $M$ eigenvectors and corresponding eigenvalues of $\bm \Psi\bm \Psi^T$, which has computational complexity of $O(MN^2)$. We also note that in the dictionary updating procedure (see \eqref{Xi_solution} in Section \ref{S_3}), it is required to compute $(\mI_N +\frac{1}{\gamma}\mPhi^\mathcal T\mPhi)^{-1}$, which can be directly obtained through \eqref{solution_robust_projection_free_lambda} (the SVD form of $\bm \Psi$). If we utilize the gradient descent method to update $\bm \Phi$, then we still need to compute such an inverse when updating the dictionary and this can be saved if we utilize \eqref{solution_robust_projection_free_lambda}.
}



\section{Online Learning SMSD Simultaneously}\label{S_3}

{\color{black}

We begin this section by considering the problem of jointly optimizing the SMSD on a very large training dataset first. Moreover, the corresponding joint optimization problem is solved via the alternating-minimization based approach. In order to reduce the complexity of learning a dictionary on such a large training data, an online algorithm with the consideration of the influence of the projected SRE, i.e., $\|\mPhi\ve\|_2$, is developed.


\subsection{Online Joint SMSD Optimization}
Given a set of $P$ training signals $\bm X(:,k)=\bm x_k$, $k=1,2,\cdots,P$, our purpose here is to jointly design the SMSD. To this end, a proper framework is required. Classical dictionary learning attempts to minimize the following sparse representation error (SRE):
\e
\min\limits_{\bm\Psi\in\mathcal{C},\bm\Theta} \|\bm X-\bm\Psi\bm\Theta\|_F^2,~~\text{s.t.}~~\|\bm\theta_k\|_0\leq K,~\forall k\label{dictionary_learning}
\ee
where $\bm\Theta(:,k) = \bm\theta_k$, $\forall k$ contains the sparse coefficient vectors and $\mathcal C$ is a constraint set to avoid trivial solutions. 

Note that in CS, we obtain the linear measurements $\vy$ as in \eqref{eq:y} and then recover the signal from $\vy$ by first recovering the sparse coefficients $\bm\theta$ and then obtain $\vx$ via $\mPsi\vtheta$. Therefore, a smaller $\bm\Phi \ve$ is also preferred.
This implies that besides reducing the SRE $\|\bm X-\bm\Psi\bm\Theta\|_F^2$, giving a sensing matrix $\mPhi$, the dictionary is also expected to reduce the projected SRE $\|\bm \Phi (\bm X - \bm \Psi \bm \Theta)\|_F^2$ \cite{DCS09}-\cite{ZhuCCS}. Now the sensing matrix and the sparsifying dictionary are jointly optimized by \cite{DCS09,BLLLJC15}
\e
\begin{array}{rl}
	\min\limits_{\bm\Psi\in\mathcal C,\bm\Theta,\bm \Phi}&\gamma\|\bm X-\bm\Psi\bm\Theta\|_F^2+\|{\bm\Phi \bm X}-\bm\Phi\bm\Psi\bm\Theta\|_F^2\\
	\text{s.t.}&  \bm \Phi = \phi (\bm \Psi), \|\bm\theta_k\|_0\leq K,~\forall k
\end{array}
\label{SMSD}
\ee
where $\phi(\bm \Psi)$ is given in \eqref{solution_robust_projection_free_lambda} and $\gamma\in\left[0,1\right]$ is a trade-off parameter to balance the SRE and the projected SRE. The value of $\gamma$ can be determined through grid search to receive the highest signal recovery accuracy on the testing dataset.

\noindent{\emph{Remark 3.1:}}
\vspace{-0.25cm}
\begin{itemize}
\item We first note that the projected SRE $\|\bm \Phi (\bm X - \bm \Psi \bm \Theta)\|_F^2$ also involves the sensing matrix $\bm \Phi$ and hence this term should also be considered in designing the sensing matrix. As we explained in Section~\ref{S_2}, the sensing matrix $\phi(\bm \Psi)$ given in \eqref{solution_robust_projection_free_lambda} already incorporates the projected SRE. This suggests the advantages of our proposed method for designing the sensing matrix compared with the ones utilized in \cite{DCS09,BLLLJC15} where the projected SRE is not considered.
\vspace{-0.25cm}
\item Compared with a separate approach that (usually) first learns the dictionary by \eqref{dictionary_learning} and then designs the sensing matrix with the learned dictionary, jointly learning the SMSD via \eqref{SMSD} is expected to yield a better CS system as the projected SRE is also minimized sequentially. We refer \cite{DCS09,BLLLJC15} for more discussions regarding the advantages of this joint approach.
\end{itemize}
\vspace{-0.25cm}

Similar to \cite{DCS09,BLLLJC15},  we utilize the alternating-minimization based method for solving the above joint optimization problem \eqref{SMSD}. The main idea is to alternatively update the sensing matrix (when the dictionary and sparse coefficients are fixed) by \eqref{solution_robust_projection_free_lambda} which is cheap and update the dictionary and the sparse coefficients by minimizing the objective function in \eqref{SMSD} when the sensing matrix is fixed, i.e.,
\e
\begin{array}{rl}
	\min\limits_{\bm\Psi\in\mathcal C,\bm\Theta}&\sigma(\bm\Psi,\bm\Theta)\triangleq\gamma\|\bm X-\bm\Psi\bm\Theta\|_F^2+\|\bm Y-\bm\Phi\bm\Psi\bm\Theta\|_F^2\\
	\text{s.t.}&\|\bm\theta_k\|_0\leq K,~\forall k
\end{array}
\label{dictionary_involve_projection}
\ee
where $\bm Y = \bm \Phi \bm X$. As we suggest utilizing a large training dataset to learn the dictionary, an online algorithm is proposed in next subsection to fit such a large-scale case. We depict  the detailed steps for solving \eqref{SMSD} in {\bf Algorithm \ref{Alg_joint_projection_dictionary}}. Compared with the methods in \cite{DCS09,BLLLJC15}, {\bf Algorithm \ref{Alg_joint_projection_dictionary}} is more suitable for working on a large training dataset as it utilizes \eqref{solution_robust_projection_free_lambda} for optimizing the sensing matrix and an online method (in next subsection) which is independent to the size of
training dataset for learning the dictionary. As we stated before, the disadvantage of the methods in \cite{DCS09,BLLLJC15} for solving \eqref{dictionary_involve_projection} is that they have to sweep all of the training data in each iteration which requires extremely high computations and memory if the training dataset becomes large. Also, the sensing matrix updated in \cite{DCS09,BLLLJC15} is an iterative algorithm that requires computing many SVDs which is not suitable for online case. The simulation results in the next section illustrate the effectiveness of {\bf Algorithm \ref{Alg_joint_projection_dictionary}}.

}
\begin{algorithm}[htb]
	\caption{Online Joint Optimization of SMSD}
	\label{Alg_joint_projection_dictionary}
	\begin{algorithmic}[1]
		\REQUIRE ~\\
		Initial dictionary $\bm\Psi_0$, number of iterations $Iter_{sendic}$.
		\lastcon ~\\          
		The sensing matrix $\bm\Phi$ and the sparsifying dictionary $\bm\Psi$.
		
		\FOR {$i=1$ {\bf to} $Iter_{sendic}$}
		\STATE Update the sensing matrix $\bm\Phi_i$ with fixed $\bm \Psi=\bm\Psi_{i-1}$ by $\phi(\bm \Psi)$ (which is specified in \eqref{solution_robust_projection_free_lambda}) and compute the two matrices $\bm\Xi_1$ and $\bm \Xi_2$
		\STATE Solve \eqref{dictionary_involve_projection} through \eqref{Dic:surrogate:easy} by {\bf Algorithm \ref{Alg_dictionary_involve_projection}} to update the dictionary $\bm\Psi$ with fixed $\bm\Phi=\bm\Phi_i$
		\ENDFOR
		\RETURN $\bm \Phi$ and $\bm\Psi$
	\end{algorithmic}
\end{algorithm}

\subsection{Online Dictionary Learning with Projected SRE}
In this subsection, we suggest an online algorithm ({\bf Algorithm \ref{Alg_dictionary_involve_projection}})  to overcome the disadvantages of the methods in \cite{DCS09,BLLLJC15} for solving \eqref{dictionary_involve_projection} when the dataset is large. The online method for solving \eqref{dictionary_involve_projection} contains two main stages: firstly, the sparse coefficient vectors in $\bm\Theta$ are computed with a fixed $\bm\Psi$ and then the sparsifying dictionary $\bm\Psi$ is updated with a fixed $\bm\Theta$.\footnote{Note that only randomly part of the training data is sampled during each iteration in our case which is different from the methods shown in \cite{DCS09,BLLLJC15}.} The detailed steps of the online algorithm are summarized in {\bf Algorithm \ref{Alg_dictionary_involve_projection}}.

\begin{algorithm}[htb]
	\caption{Online Dictionary Learning with Projected SRE}
	\label{Alg_dictionary_involve_projection}
	\begin{algorithmic}[1]
		\REQUIRE ~\\
		Training data $\bm X\in \Re^{N\times P}$, trade-off parameter $\gamma$, initial sensing matrix $\bm\Phi$ and dictionary $\bm\Psi_0$, batch size $\eta\geq1$, the sparsity level $K$, the power parameter $\rho$, number of iterations $Iter_{dic}$.
		\lastcon ~\\          
		Dictionary $\bm\Psi$.
		\STATE $\bm A_0 \leftarrow \bm 0$, $\bm B_0 \leftarrow \bm 0$, $i\leftarrow 1$
		\FOR {$t=1$ {\bf to} $Iter_{dic}$}
		\IF {$i+\eta\leq P$}
		\STATE $\bm X_{t}\leftarrow\bm X(:,i:i+\eta-1)$, $\bm Y_t\leftarrow \bm\Phi\bm X_t$\\
		$i\leftarrow i+\eta$
		\ELSE
		\STATE Shuffle $\bm X$, $i\leftarrow 1$
		\STATE $\bm X_t\leftarrow\bm X(:,i:i+\eta-1)$, $\bm Y_t\leftarrow \bm\Phi\bm X_t$\\
		$i\leftarrow i+\eta$
		\ENDIF
		\STATE Sparse coding
		\e
		\left.\begin{array}{rl}\bm \Theta_{t}= &\arg\min_{\tilde{\bm\Theta}_{t}}\left\|\bmat\sqrt{\gamma} \bm X_{t}\\\bm Y_t\emat-\bmat\sqrt{\gamma}~ \bm\Psi_{t-1}\\ \bm\Phi\bm\Psi_{t-1}\emat\tilde{\bm\Theta}_t\right\|_F^2 \\
			&\text{s.t.}~\|\tilde{\bm\Theta}_t(:,k)\|_0\leq K, \forall k
		\end{array}\right.\label{sparse_coding}
		\ee
		\STATE $\bm A_t\leftarrow (1-\frac{1}{t})^\rho\bm A_{t-1}+\frac{1}{\eta}\bm \Theta_{t}\bm \Theta_{t}^\mathcal T$\label{Alg_dictionary_involve_projection:A_t}
		\STATE $\bm B_t\leftarrow (1-\frac{1}{t})^\rho\bm B_{t-1}+\frac{1}{\eta}\bm X_{t}\bm \Theta_{t}^\mathcal T$\label{Alg_dictionary_involve_projection:B_t}
		\STATE Compute $\bm\Psi_t$ using {\bf Algorithm \ref{Alg_dictionary_updating}} with $\bm\Psi_{t-1}$  as the initial value, so that
		\en
		\bm\Psi_t=\arg\min_{\bm\Psi\in\mathcal C}\hat \sigma_t(\bm\Psi)
		\een
		
		\ENDFOR
		\RETURN $\bm\Psi_{Iter_{dic}}$ (learned dictionary)
	\end{algorithmic}
\end{algorithm}

For simplicity, similar to \cite{DCS09,BLLLJC15}, {\bf Algorithm \ref{Alg_dictionary_involve_projection}}  utilizes Orthogonal Matching Pursuit (OMP) for addressing the sparse coding problem \eqref{sparse_coding} to update $\bm \Theta$. In the dictionary updating procedure, we are going to solve the following surrogate function in $t$-th iteration instead of considering \eqref{dictionary_involve_projection} directly:

\e
\min_{\bm\Psi} \sigma_t(\bm\Psi)\triangleq \frac{1}{2}\sum_{i=1}^t\left(\gamma\|\bm X_i-\bm \Psi\bm\Theta_i\|_F^2+\|\bm Y_i-\bm\Phi\bm\Psi\bm\Theta_{i}\|_F^2\right)\label{Dic:surrogate}
\ee
Clearly, \eqref{Dic:surrogate} is equivalent  to the following problem:
\e
\min_{\bm\Psi} \hat \sigma_t(\bm\Psi)\triangleq \frac{1}{2}\text{Tr}\left(\bm \Psi^\mathcal T\bm\Omega\bm \Psi\bm A_t\right)-\text{Tr}\left(\bm \Psi^\mathcal T\bm\Omega\bm B_t\right)\label{Dic:surrogate:easy}
\ee
where $\bm A_t= \sum_{i=1}^t\bm \Theta_{i}\bm \Theta_{i}^\mathcal T$, $\bm B_t=\sum_{i=1}^t\bm X_{i}\bm \Theta_{i}^\mathcal T$, $\bm\Omega=\bm I_N+\frac{1}{\gamma}\bm\Phi^\mathcal T\bm\Phi$ and $\text{Tr}(\cdot)$ denotes the trace operator.
Here, we intend to utilize the block-coordinate descent algorithm to update the dictionary column by column.\footnote{\color{black}Here we choose block-coordinate descent because 1) it is parameter-free and does not require tuning any learning rate which is required by stochsatic gradient descent; and 2) there is no need to calculate the inversion of some matrices and only some simple algebra operations are involved.} Specially, the gradient of \eqref{Dic:surrogate:easy} with respect to $j$-th column of $\bm\Psi$ is
\e
\frac{\partial\hat \sigma_t(\bm\Psi)}{\partial\bm\psi_j}=\bm\Psi\bm a_j-\bm b_j+\frac{1}{\gamma}\bm\Phi^\mathcal T\bm\Phi\bm\Psi\bm a_j-\frac{1}{\gamma}\bm\Phi^\mathcal T\bm\Phi\bm b_j\label{gradient_dictionary_column}
\ee
where $\bm\psi_j$, $\bm a_j$ and $\bm b_j$ are the $j$-th column of the matrices $\bm\Psi$, $\bm A_t$ and $\bm B_t$, respectively. Forcing \eqref{gradient_dictionary_column} to be zero, the $j$-th column of $\bm\Psi$ should be updated as in \eqref{dictionary_updating_block_updating}  while keeping the others fixed.
The matrices $\bm \Xi_1$ and $\bm\Xi_2$ are equivalent to $\bm\Omega^{-1}$, and $\bm\Omega^{-1}\bm\Phi^\mathcal T\bm\Phi$, respectively. Due to the special structure of $\bm\Phi$ shown in \eqref{solution_robust_projection_free_lambda}, the matrices $\bm \Xi_1$ and $\bm \Xi_2$ can be evaluated simply by:
\e
\left.\begin{array}{rl}
	\bm\Xi_1=&\bm U_{\bm\Psi}\bmat\left(\gamma^{-1}\bm \Lambda_M^{-2}+\bm I_M\right)^{-1}&\bm 0\\\bm0&\bm I_{N-M}\emat\bm U_{\bm\Psi}^\mathcal T\\
	
	\bm\Xi_2=&\bm U_{\bm\Psi}\bmat\left(\gamma^{-1}\bm I_M+\bm\Lambda_M^2\right)^{-1}&\bm 0\\ \bm 0&\bm 0\emat \bm U_{\bm\Psi}^\mathcal T
\end{array}\right.\label{Xi_solution}
\ee
Although we still need to compute the inverse of matrices in \eqref{Xi_solution}, the computational burden becomes cheap here because the related matrices are diagonal matrices.  The detailed steps of updating the dictionary are proposed in {\bf Algorithm \ref{Alg_dictionary_updating}}. Each column of the dictionary is then normalized to have a unit $\ell_2$ norm  to avoid the trivial solution. Following, we suggest several remarks which is useful in practice to improve the performance of {\bf Algorithm \ref{Alg_dictionary_involve_projection}}.

\noindent{\emph{Remark 3.2:}}
\vspace{-0.25cm}
\begin{itemize}
	\item When the training dataset has finite size (though it maybe very large), we suggest simulating the random sampling of the data by cycling over a randomly permuted dataset, i.e., Steps $3$ to $8$ shown in {\bf Algorithm \ref{Alg_dictionary_involve_projection}}.
	\vspace{-0.25cm}
	\item As introduced before, we sample one example from the training dataset instead of swapping all of the data during each iteration. A typical strategy which can be used to accelerate the algorithm is to sample a relatively large examples instead of only one example ($\eta>1$). This belongs to a classical heuristic strategy in stochastic gradient descent method \cite{LB10} called mini-batch which is also useful in our case. Another useful strategy to accelerate the algorithm is to add the history into $\bm A_t$ and $\bm B_t$ as we already shown the formulation of $\bm A_t$ and $\bm B_t$ through the accumulation of $\bm \Theta_t$ and $\bm X_t$. Meantime, we can imagine that the dictionary will approach to a stationary point after necessary iterations.\footnote{Following, we will see that such an observation is compatible with our convergence analysis.} So the latest $\bm\Theta_t$ is more important than the old one. According to such an observation, a forgetting factor is added in $\bm A_t$ and $\bm B_t$ to deemphasize the older information in $\mA_t$ and $\mB_t$ because we want the latest one to dominate the information in $\bm A_t$ and $\bm B_t$. To reach such a purpose, we set the forgetting factor to be $(1-\frac{1}{t})^\rho$ in updating $\bm A_t$ and $\bm B_t$. The detailed formulation can be found in Steps \ref{Alg_dictionary_involve_projection:A_t} and \ref{Alg_dictionary_involve_projection:B_t} in {\bf Algorithm \ref{Alg_dictionary_involve_projection}}. Typically, $\rho$ is set to be larger than $1$.
	\vspace{-0.25cm}
	\item In practical situation, the dictionary learning technique will lead to a dictionary whose atoms are never (or very seldom) used in sparse coding procedure, which happens typically with a not well designed initialization. If we encounter such a phenomenon, one training example is randomly sampled to replace such an atom in this paper.  	
\end{itemize}
\vspace{-0.25cm}


\begin{algorithm}[!htb]
	\caption{Dictionary Update}
	\label{Alg_dictionary_updating}
	\begin{algorithmic}[1]
		\REQUIRE ~\\
		{$\bm A_{t-1}=\left[\bm a_1,\cdots,\bm a_L\right], ~\bm B_{t-1}=\left[\bm b_1,\cdots,\bm b_L\right]$, $\bm\Xi_1, ~\bm\Xi_2$,\\
			$\bm \Psi_{t-1}=\left[\bm \psi_1,\cdots,\bm\psi_L\right]$.}
		\lastcon ~\\          
		Dictionary $\bm\Psi_l$.
		\REPEAT
		\FOR {$j=1$ {\bf to} $L$}
		\STATE Update the $j$-th column to optimize \eqref{Dic:surrogate:easy}:
		\e
		\left.\begin{array}{rcl}
			\bm u_j&\leftarrow&\bm\Xi_1\left[\frac{\bm b_j-\bm\Psi_{t-1}\bm a_j}{\bm A_{t-1}(j,j)}+\bm\psi_j\right]+\\[5pt]
			&&\bm \Xi_2\left[\frac{\bm bj}{\bm A_{t-1}(j,j)\gamma}+\frac{1}{\gamma}\bm \psi_j-\frac{\bm\Psi_{t-1}\bm a_j}{\bm A_{t-1}(j,j)}\right]\\[5pt]
			\bm\Psi_{t-1}(:,j)&\leftarrow&\frac{\bm u_j}{\|\bm u_j\|_2}
		\end{array}\right.\label{dictionary_updating_block_updating}
		\ee
		\ENDFOR
		\UNTIL
		\RETURN $\bm \Psi_{t-1}$ (updated dictionary)
	\end{algorithmic}
\end{algorithm}

\subsection{Convergence Analysis}
Although the logic in our algorithm ({\bf Algorithm \ref{Alg_joint_projection_dictionary}}) is relatively simple, it is nontrivial to prove the convergence of {\bf Algorithm $3$} because of its stochastic nature, the non-convexity and two different objective functions (\eqref{robust_projection_free_lambda} and \eqref{dictionary_involve_projection}). In what follows, we provide the convergence analysis for each step in {\bf Algorithm $3$}. To that end, notice that {\bf Algorithm $3$} contains two parts: optimizing the sensing matrix and learning the sparsifying dictionary to decrease the coherence of $\bm\Phi\bm\Psi$ and to minimize the sparse representation error, respectively. Separately, we claim both of these two steps are convergent.\footnote{The simulation result shown in the next section indicates{\bf Algorithm \ref{Alg_dictionary_updating}} is convergent. However, we left the whole proof for future work.} For the sensing matrix updating procedure, we attain the minimum with one step because of the closed-form solution. Following, we need to investigate whether the updating procedure in dictionary is also convergent. In fact, such a convergence is hold by the following assumptions and propositions which are originally from \cite{MBPS09,MBPS10}.\\[2pt]
\noindent{\bf Assumptions:}
\vspace{-6pt}
\begin{itemize}
	\item[(1).] {\emph {The data admits a distribution with compact support $K$.}}
	\vspace{-0.25cm}
	\item[(2).] {\emph{The quadratic surrogate functions $ \hat \sigma_t$ (defined in \eqref{Dic:surrogate:easy}) are strictly convex with lower-bounded Hessians}}. Assume that the matrix $\bm A_t$ is positive definite. In fact, this hypothesis is in practice verified experimentally after a few iterations of the algorithm when the initial dictionary is reasonable. Specially, all of atoms will be chosen at least once in the sparse coding procedure during the whole iterations. The Hessian matrix of $\hat \sigma_t$ is $\bm \Omega \otimes 2\bm A_t$ where $\otimes$ represents the kronecker product. Clearly, the eigenvalues of  $\bm \Omega \otimes 2\bm A_t$ is the product of $\bm \Omega$ and $\bm A_t$'s eigenvalues. This indicates that the Hessian matrix of $\hat \sigma_t$ is positive definite because $\bm \Omega$ is a positive definite matrix which results in the fact that $ \hat \sigma_t$ is a strictly convex function.
\vspace{-0.25cm}
	\item[(3).]  {\emph{A particular sufficient condition for the uniqueness of the sparse coding solution is satisfied.}} Considering our sparse coding mission \eqref{sparse_coding}, we see it exactly shares the same structure as in \cite{MBPS09,MBPS10}. So this assumption is also satisfied in our case.
\end{itemize}

\begin{prop}
	\label{prop:decrease_func}
	\cite[Propostion 2]{MBPS10}
	Assume the assumptions (1) to (3) are hold, then we have
	\vspace{-0.25cm}
	\begin{itemize}
		\item[1.] ${\hat \sigma_t(\bm\Psi_t)}$ convergences almost surely;
		\vspace{-0.25cm}
		\item[2.] $\sigma(\bm\Psi_t)-{\hat \sigma_t(\bm\Psi_t)}$ converges almost surely to $0$;
		\vspace{-0.25cm}
		\item[3.] $\sigma(\bm\Psi_t)$ converges almost surely.
	\end{itemize}
\end{prop}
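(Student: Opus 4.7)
The plan is to port the online dictionary learning convergence machinery of Mairal, Bach, Ponce and Sapiro (which is cited as the source of this statement) to our modified objective, establishing the three claims sequentially since each later claim rests on the earlier one. The key difference between our setting and the classical one is that our per-sample loss now involves the additional projected SRE term $\|\bm\Phi\bm x - \bm\Phi\bm\Psi\bm\theta\|_2^2$; however, since the sensing matrix $\bm\Phi$ is held fixed throughout Algorithm \ref{Alg_dictionary_involve_projection}, this term is merely another smooth quadratic in $(\bm\Psi,\bm\theta)$ and can be absorbed into the same framework.

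For claim 1, I would argue via a quasi-martingale on the sequence $\hat\sigma_t(\bm\Psi_t)$. First, since $\bm\Psi_t$ minimizes $\hat\sigma_t$ by construction, we have $\hat\sigma_t(\bm\Psi_t) \leq \hat\sigma_t(\bm\Psi_{t-1})$. Next, decompose
\[
\hat\sigma_t(\bm\Psi_{t-1}) - \hat\sigma_{t-1}(\bm\Psi_{t-1}) = \tfrac{1}{t}\bigl(\ell_t(\bm\Psi_{t-1}) - \hat\sigma_{t-1}(\bm\Psi_{t-1})\bigr),
\]
where $\ell_t$ is the new per-batch contribution (including both the SRE and the projected SRE). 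The compactness assumption (1) together with the compactness of the dictionary constraint set $\mathcal C$ (unit-norm columns) bounds the increments uniformly, and the positivity of $\bm A_t$ from assumption (2) guarantees $\hat\sigma_t$ is bounded below. An application of Bottou's quasi-martingale convergence theorem to the filtration generated by the sampled mini-batches then yields almost sure convergence of $\hat\sigma_t(\bm\Psi_t)$ and, as a byproduct, summability of the positive-part increments.

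For claim 2, I would leverage uniform convergence of empirical processes over the compact dictionary set. Because the sparse codes $\bm\Theta_i$ used to build $\hat\sigma_t$ are optimal for the historical dictionaries $\bm\Psi_{i-1}$ but only feasible (not optimal) for $\bm\Psi_t$, the surrogate upper-bounds the true empirical cost $\sigma_t$. Lipschitz continuity of the per-sample loss $\ell$ in $\bm\Psi$ (uniformly over $\bm x\in K$, guaranteed by assumption (3) on uniqueness of the sparse code, which makes the optimal code a locally Lipschitz function of $\bm\Psi$) together with the summability obtained in claim 1 forces $\hat\sigma_t(\bm\Psi_t) - \sigma_t(\bm\Psi_t) \to 0$ almost surely. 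Finally, a Glivenko-Cantelli/Donsker-type argument over the compact parameter space delivers $\sup_{\bm\Psi\in\mathcal C} |\sigma_t(\bm\Psi) - \sigma(\bm\Psi)| \to 0$ a.s., completing claim 2. Claim 3 then follows by combining claims 1 and 2.

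The main obstacle is verifying the $O(1/t)$ increment bound on $\hat\sigma_t(\bm\Psi_{t-1}) - \hat\sigma_{t-1}(\bm\Psi_{t-1})$ in the presence of the forgetting factor $(1-1/t)^\rho$ that we inserted in steps \ref{Alg_dictionary_involve_projection:A_t}--\ref{Alg_dictionary_involve_projection:B_t} of Algorithm \ref{Alg_dictionary_involve_projection}. Strictly speaking, the statement cited from \cite{MBPS10} corresponds to $\rho=0$; for $\rho>1$ the accumulators $\bm A_t,\bm B_t$ re-weight past samples, and one must check that the weighted partial sums still behave like empirical means, e.g., by normalizing and invoking a weighted law of large numbers with weights $w_{t,i} = \prod_{k=i+1}^{t}(1-1/k)^{\rho}$. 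This technical verification, together with tracking how the extra quadratic term $\|\bm\Phi\bm\Psi\bm\Theta-\bm Y\|_F^2$ enters the Lipschitz constants, is where the bulk of the bookkeeping lies, though no fundamentally new ideas beyond \cite{MBPS10} are required.
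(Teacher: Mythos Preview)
The paper provides no proof of its own for this proposition: it simply cites \cite[Proposition~2]{MBPS10} and argues, in the surrounding text, that the three assumptions carry over to the present setting, so that the result can be imported verbatim. Your sketch therefore goes well beyond what the paper does, and the route you outline is precisely the Mairal--Bach--Ponce--Sapiro argument that the citation points to (quasi-martingale bound on $\hat\sigma_t(\bm\Psi_t)$, then a Lipschitz/uniform-convergence argument to close the gap with $\sigma$).

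Two cautions are worth recording. First, you already flag the forgetting factor $(1-1/t)^\rho$ as a deviation from the $\rho=0$ setting of \cite{MBPS10}; the paper does not address this either, so your remark that a weighted law of large numbers is needed is accurate and is genuinely an open detail here. Second, the sparse-coding step in this paper uses OMP with an $\ell_0$ constraint (equation~\eqref{sparse_coding}), whereas the Lipschitz-in-$\bm\Psi$ property of the optimal code that you invoke for claim~2 is established in \cite{MBPS10} for the $\ell_1$/lasso formulation. The paper's assumption~(3) asserts that the sparse-coding problem ``shares the same structure'' as in \cite{MBPS10}, but strictly speaking the regularity of the $\ell_0$-constrained argmin is more delicate than in the convex case. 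This is a gap in the paper's verification of the hypotheses rather than in your proposal, but if you intend to make the argument rigorous you would need to either switch to an $\ell_1$ surrogate or supply a separate stability argument for OMP.
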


\begin{prop}
	\label{prop:dic_dif}
	\cite[Propostion 3]{MBPS10}
	Under assumptions (1) to (3), the distance between $\bm \Psi_t$ and the set of stationary points of the dictionary learning problem converges almost surely to 0 when $t$ tends to infinity.
\end{prop}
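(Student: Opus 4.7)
The plan is to follow the line of argument used by Mairal et al.\ in \cite{MBPS10} for their Proposition 3, adapting it to the surrogate $\hat\sigma_t$ defined in \eqref{Dic:surrogate:easy} which carries the extra projected SRE term weighted through $\bm\Omega = \bm I_N + \frac{1}{\gamma}\bm\Phi^\mathcal T\bm\Phi$. The starting point is Proposition \ref{prop:decrease_func}, which already tells us that $\hat\sigma_t(\bm\Psi_t)$ and $\sigma(\bm\Psi_t)$ both converge almost surely and that their difference vanishes. What remains is to translate this value-level convergence into a first-order (stationarity) statement.

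First I would establish that both $\sigma$ and $\hat\sigma_t$ are continuously differentiable in $\bm\Psi$ with Lipschitz gradients on the compact constraint set $\mathcal C$ (unit-norm columns), which follows directly from the quadratic form in \eqref{Dic:surrogate:easy} and the boundedness of the data support assumed in (1) together with the boundedness of $\bm A_t, \bm B_t$ that this entails. Next, using assumption (2) that $\hat\sigma_t$ is strictly convex with Hessian $\bm\Omega \otimes 2\bm A_t$ uniformly bounded below by some $\kappa_1 I \succ 0$ (since $\bm\Omega \succeq \bm I_N$ and $\bm A_t$ is eventually positive definite), $\bm\Psi_t$ is the unique minimizer of $\hat\sigma_t$ over $\mathcal C$; hence it satisfies the first-order optimality condition $\langle \nabla \hat\sigma_t(\bm\Psi_t),\, \bm\Psi - \bm\Psi_t\rangle \geq 0$ for all $\bm\Psi \in \mathcal C$.

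The key step is then to show $\nabla \sigma(\bm\Psi_t) - \nabla \hat\sigma_t(\bm\Psi_t) \to 0$ almost surely. For this I would exploit the fact that both $\sigma$ and $\hat\sigma_t$ are quadratic in $\bm\Psi$ with coefficients built from the empirical averages $\frac{1}{t}\bm A_t$ and $\frac{1}{t}\bm B_t$ (up to the forgetting factor, which asymptotically behaves as an average). Combining (a) the almost-sure convergence of these empirical averages to their expectations (a law of large numbers argument, as in \cite{MBPS09,MBPS10}, using assumption (3) to get a measurable, uniformly-bounded selection of sparse codes) and (b) the fact that $\sigma(\bm\Psi_t) - \hat\sigma_t(\bm\Psi_t) \to 0$ a.s., the gradients must also agree in the limit by standard arguments for strongly convex quadratics on a compact set. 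Passing to the limit in the optimality inequality above then yields $\langle \nabla \sigma(\bm\Psi^*),\, \bm\Psi - \bm\Psi^*\rangle \geq 0$ for every limit point $\bm\Psi^*$ of $\{\bm\Psi_t\}$, so every such limit point is a stationary point of the original problem~\eqref{dictionary_involve_projection}, and since $\mathcal C$ is compact, $\mathrm{dist}(\bm\Psi_t, \mathcal{S}^*) \to 0$ almost surely.

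The main obstacle will be the uniform convergence of gradients, which is where the specific structure of our problem (the extra projected SRE term and the forgetting factor $(1-1/t)^\rho$) differs from the classical setting of \cite{MBPS10}. I expect this to go through because $\bm\Omega$ is a fixed positive definite matrix (up to the outer iteration in {\bf Algorithm \ref{Alg_joint_projection_dictionary}}, which is why the convergence statement is for the inner loop with $\bm\Phi$ fixed), and because for $\rho \geq 1$ the weighted average $(1-\tfrac{1}{t})^\rho \bm A_{t-1} + \tfrac{1}{\eta}\bm\Theta_t\bm\Theta_t^\mathcal T$ still behaves, after normalization, as an exponentially weighted empirical mean with the same limit as the uniform empirical mean. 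Once the gradient-consistency step is in hand, the remainder is the standard subsequence/closedness argument sketched above.
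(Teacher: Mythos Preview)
The paper does not supply a proof of this proposition at all; it is stated purely as a citation of \cite[Proposition 3]{MBPS10}, and the only argument the paper offers is the short paragraph after the two propositions, which checks that assumptions (1)--(3) carry over to the present surrogate (the Hessian $\bm\Omega\otimes 2\bm A_t$ is positive definite because $\bm\Omega\succeq \bm I_N$, etc.). Your sketch therefore already goes considerably beyond what the paper provides.

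That said, your outline is the right adaptation of Mairal et al.'s argument: use Proposition~\ref{prop:decrease_func} for value-level convergence, upgrade to gradient-level convergence via the quadratic structure and a uniform law of large numbers for $\bm A_t,\bm B_t$, then pass to the limit in the first-order optimality condition on the compact set $\mathcal C$. The one place that needs more care than you indicate is the forgetting factor. The convergence proof in \cite{MBPS10} is written for the plain empirical-average surrogate, whereas Algorithm~\ref{Alg_dictionary_involve_projection} uses the recursion with $(1-\tfrac{1}{t})^\rho$ in Steps~\ref{Alg_dictionary_involve_projection:A_t}--\ref{Alg_dictionary_involve_projection:B_t}. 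Your remark that this weighting ``still behaves, after normalization, as an exponentially weighted empirical mean with the same limit'' is plausible but not immediate; in \cite{MBPS10} the forgetting factor is introduced as a practical heuristic and the formal proof does not cover it. To make the argument rigorous you would either have to take $\rho=0$ (in which case, since $\bm\Omega$ is a fixed positive-definite matrix during the inner loop, the setting reduces exactly to that of \cite{MBPS10}) or supply the missing almost-sure convergence statement for the weighted averages. The paper does not address this point either, so this is a gap in both treatments rather than in yours specifically.
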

Obviously, these assumptions are also hold in our case. Conclude that the dictionary updating procedure in our case is also convergent. This verifies what we argue at the second term in {\emph{Remarks $1$}} that the dictionary will approach to a stationary point after enough iterations. {Though we have not rigorously proved the convergence of {\bf Algorithm \ref{Alg_joint_projection_dictionary}}, the convergence of the two parts in {\bf Algorithm \ref{Alg_joint_projection_dictionary}} indicates that the proposed algorithm at least is stable because both of these two steps (updating the sensing matrix and the dictionary) are convergent and decrease the value of the corresponding objective functions. The experiment in the following section also demonstrates such a statement. We note that such convergence is not discussed in \cite{DCS09,BLLLJC15}, where the sensing matrix is updated with an iterative algorithm rather than as here with a closed-form solution. The only convergence analysis we are aware of jointly designing sensing matrix and dictionary is in \cite{Li2018joint}, where both the sensing matrix and the dictionary are optimized in the same framework, but the training algorithm is not customized for large-scale applications. As for the convergence of {\bf Algorithm \ref{Alg_joint_projection_dictionary}}, we defer this to the future work.}
\section{Simulation Results}\label{S_4}
Some experiments on natural images are posed in this section to illustrate the performance of the proposed {\bf Algorithm \ref{Alg_joint_projection_dictionary}}, denoted as $CS_{Alg3}$. We also compare our method with the ones given in \cite{DCS09,BLLLJC15} which also share the same framework as ours but are based on the batch method (sweep the whole training data in each iteration). Although \cite{BLLLJC15} developed the closed-form solutions for each updating procedures, it is still inefficient for the case when the training dataset is large. The methods given in \cite{DCS09,BLLLJC15} are denoted as $CS_{S-DCS}$ and $CS_{BL}$, respectively. Both training and testing data are extracted from the LabelMe database \cite{RTF}. All of the experiments are carried out on a laptop with Intel(R) i7-6500 CPU @ 2.5GHz and RAM 8G.

The signal reconstruction accuracy is evaluated in terms of Peak Signal to Noise Ratio (PSNR) given in \cite{E10}
\en
\varrho_{psnr}\triangleq 10\times \log10\left[\frac{\left(2^r-1\right)^2}{\varrho_{mse}}\right]dB
\een
with $r=8$ bits per pixel and $\varrho_{mse}$ defined as
\en
\varrho_{mse}\triangleq \frac{1}{N\times P}\sum\limits_{k=1}^{P}\|\tilde{\bm x}_k-\bm x_k\|_2^2
\een
where $\bm x_k$ is the original signal, $\tilde{\bm x}_k=\bm\Psi\tilde{\bm\theta}_k$  stands for the recovered signal and $P$ is the number of patches in an image or testing data. The training and testing data are obtained through the following method.

\vspace{3pt}
\noindent \emph{Training data} A set of $8\times 8$ non-overlapping patches is obtained by randomly extracting $400$ patches from each of the images in the whole LabelMe training dataset, with each patch of $8\times 8$ arranged as a vector of $64\times 1$. A set of $400\times2920=1.168\times10^6$ training samples is received for training.

\vspace{3pt}
\noindent\emph{Testing data} The testing data is extracted from the LabelMe testing dataset. Here, we randomly extract $15$ patches from $400$ images and each sample is an $8\times 8$ non-overlapping patch. Finally, we obtain $6000$ testing samples.
\vspace{3pt}

$8\times10^4$ and $6\times10^3$ patches are randomly chosen from the $1.168\times10^6$ \emph{Training data} for $CS_{S-DCS}$ and $CS_{BL}$, respectively, because these two methods cannot stand too large training patches. In order to show the advantage of designing the SMSD on a large training dataset, the same $6\times 10^3$ patches which is prepare for $CS_{BL}$ are also utilized by $CS_{S-DCS}$. For convenience, this case is called $CS_{S-DCS}-small$. The parameters in these two methods are chosen as recommended in their papers. To keep the same dimensions in $\bm\Phi$, $\bm\Psi$ and sparsity level as given in \cite{BLLLJC15}, $M$, $L$ and $K$ are set to $20$, $256$ and $4$ in $CS_{Alg3}$, respectively. The parameters $\gamma$, $\eta$, $Iter_{dic}$ and $Iter_{sendic}$ are set to $\frac{1}{32}$, $128$, $1000$ and $10$ in the proposed {\bf Algorithm \ref{Alg_joint_projection_dictionary}}. The initial sensing matrix and dictionary for \cite{DCS09,BLLLJC15} are a random Gaussian matrix and the DCT dictionary, respectively. The initial sparsifying dictionary in the proposed algorithm is randomly chosen from the training data and the corresponding sensing matrix is obtained through the method shown in Section \ref{S_2}.\footnote{According to our experiments, using the initial value in such a case in our method results in a slightly better performance compared with the initial setting suggested in \cite{DCS09,BLLLJC15}.} The signal recovery accuracy of the aforementioned methods on \emph{testing data} is shown in Fig. \ref{f_1}. The corresponding CPU time of the four cases in seconds are given in Table \ref{t_1}.

\begin{figure}[!htb]
	
	\centering
	\includegraphics[width=0.48\textwidth]{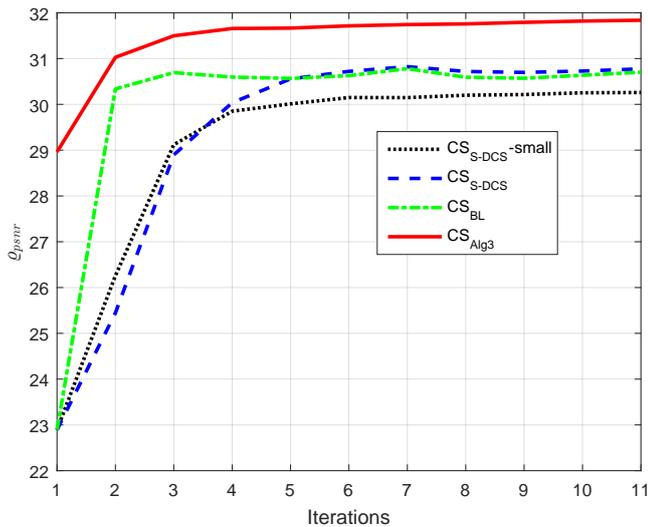}\\
	\caption{The $\sigma_{psnr}$ of the four different cases versus the iteration number on testing data.}\label{f_1} 
\end{figure}

\begin{table}[!htb]
	\centering   \caption{The CPU Time Of The Four Different Cases. (Seconds)}\label{t_1}
	\begin{tabular}{|c|c|c|c|}\hline
		$CS_{S-DCS}-small$&$CS_{S-DCS}$&$CS_{BL}$&$CS_{Alg3}$\\\hline
		$2.79\times10^1$&$1.32\times10^3$&$4.33\times10^4$&$1.54\times10^2$\\\hline
	\end{tabular}
\end{table}

Benefiting from the large training dataset, $CS_{S-DCS}$ yields a better performance in terms of $\sigma_{psnr}$ than $CS_{S-DCS}-small$. This indicates that enlarging the training dataset leads to a better CS system. Compared with $CS_{S-DCS}-small$, $CS_{BL}$ has a higher $\varrho_{psnr}$ which meets the observation shown in \cite{BLLLJC15}. However, $CS_{BL}$ needs many SVDs in the algorithm which makes it inefficient and hard to extend to the situation when the training dataset is large. This concern can be observed from Table \ref{t_1} that $CS_{BL}$ needs much more CPU time even for only $6000$ training patches. Although $CS_{BL}$ has a better performance than $CS_{S-DCS}-small$, this advantage will disappear if we enlarge the size of the training dataset in $CS_{S-DCS}$. It can be seen from Fig. \ref{f_1} that $CS_{S-DCS}$ has a similar performance with $CS_{BL}$, but it requires a shorter training time, see Table \ref{t_1}. $CS_{Alg3}$ has a best performance in terms of $\varrho_{psnr}$ compared with other methods. Meantime, $CS_{Alg3}$ has a relatively shorter CPU time but has the largest training dataset. It indicates that {\bf Algorithm \ref{Alg_joint_projection_dictionary}} is suitable for training the SMSD on a large training dataset. Moreover, we observe that training on a large dataset can obtain a better SMSD and the proposed {\bf Algorithm \ref{Alg_joint_projection_dictionary}} belongs to a good choice which takes the efficiency and effectiveness into account simultaneously.

Additionally, we also investigate the performance of the four different CS systems mentioned in this paper on ten natural images. The Structural Similarity Index (SSIM) \cite{ZBSS04} is also involved in comparing the recovered natural images by the different methods. As the results shown in Table \ref{recovered:PSNR:SSIM}, we see the proposed {\bf Algorithm $3$} yields the highest PSNR and SSIM. Compared with $CS_{S-DCS}-small$, $CS_{S-DCS}$ has a higher PSNR and SSIM on all of the ten testing natural images. This meets the argument in this paper that enlarging the size of the training dataset is significant in practice. This can also be illustrated by the methods between $CS_{BL}$ and $CS_{S-DCS}$. Note that $CS_{BL}$ works better than $CS_{S-DCS}-small$ when they have the same small size of training dataset. However, the performance of $CS_{S-DCS}$ will exceed $CS_{BL}$ when the size of the training data is enlarged. All of these imply that training the sensing matrix and the corresponding sparsifying dictionary on a large dataset is preferred. Moreover, the proposed {\bf Algorithm $3$} is a good choice to stand such a mission. To examine the visual effect clearly, the recovered performance of two natural images, i.e., `Lena' and `Mandril' in Fig. \ref{Lena:Mandril:original}, are shown in Fig.s \ref{Lena:reconst} and \ref{Mandril:reconst}.

\begin{table*}[!htb]
	\centering   \caption{Performance Evaluated With Four Cases Shown In This Paper. (Left: PSNR, Right: SSIM. The highest is marked with bold.)}\label{recovered:PSNR:SSIM}
	\begin{tabular}{l||c|c||c|c||c|c||c|c}\hline\hline
		&\multicolumn{2}{|c||}{$CS_{S-DCS}-small$}&\multicolumn{2}{|c||}{$CS_{S-DCS}$}&\multicolumn{2}{|c||}{$CS_{BL}$}&\multicolumn{2}{|c}{$CS_{Alg3}$}\\\hline
		Lena&$33.0566$&$0.9089$&$33.7859$&$0.9184$&$33.3059$&$0.9111$&$\bm{34.6557}$&$\bm{0.9281}$\\\hline
		Elaine&$32.3990$&$0.8073$&$32.6903$&$0.8145$&$32.4076$&$0.8043$&$\bm{33.1756}$&$\bm{0.8244}$\\\hline
		Man&$31.1978$&$0.8738$&$31.8509$&$0.8866$&$31.4686$&$0.8782$&$\bm{32.5941}$&$\bm{0.8999}$\\\hline
		Mandrill&$23.4291$&$0.7598$&$23.8411$&$0.7823$&$23.8221$&$0.7746$&$\bm{24.3753}$&$\bm{0.8007}$\\\hline
		Peppers&$28.9462$&$0.8877$&$29.6975$&$0.9005$&$29.4145$&$0.8925$&$\bm{30.6859}$&$\bm{0.9169}$\\\hline
		Boat&$29.7350$&$0.8561$&$30.3488$&$0.8679$&$30.1027$&$0.8580$&$\bm{31.2858}$&$\bm{0.8837}$\\\hline
		House&$31.5166$&$0.8842$&$32.0602$&$0.8985$&$32.0707$&$0.8956$&\bm{$33.0146}$&$\bm{0.9158}$\\\hline
		Cameraman&$26.2240$&$0.8581$&$26.8272$&$0.8716$&$26.4545$&$0.8673$&$\bm{27.4254}$&$\bm{0.8877}$\\\hline
		Barbara&$25.6148$&$0.8239$&$25.9153$&$0.8316$&$25.5165$&$0.8168$&$\bm{26.0835}$&$\bm{0.8393}$\\\hline
		Tank&$30.7233$&$0.8252$&$30.8210$&$0.8369$&$31.1403$&$0.8361$&$\bm{31.7818}$&$\bm{0.8576}$\\\hline
		Averaged&$29.2842$&$0.8485$&$29.7838$&$0.8609$&$29.5703$&$0.8534$&$\bm{30.5078}$&$\bm{0.8754}$
	\end{tabular}
\end{table*}

\begin{figure}[!htb]
	\centering
	\subfigure[`Lena']{\includegraphics[width=0.23\textwidth]{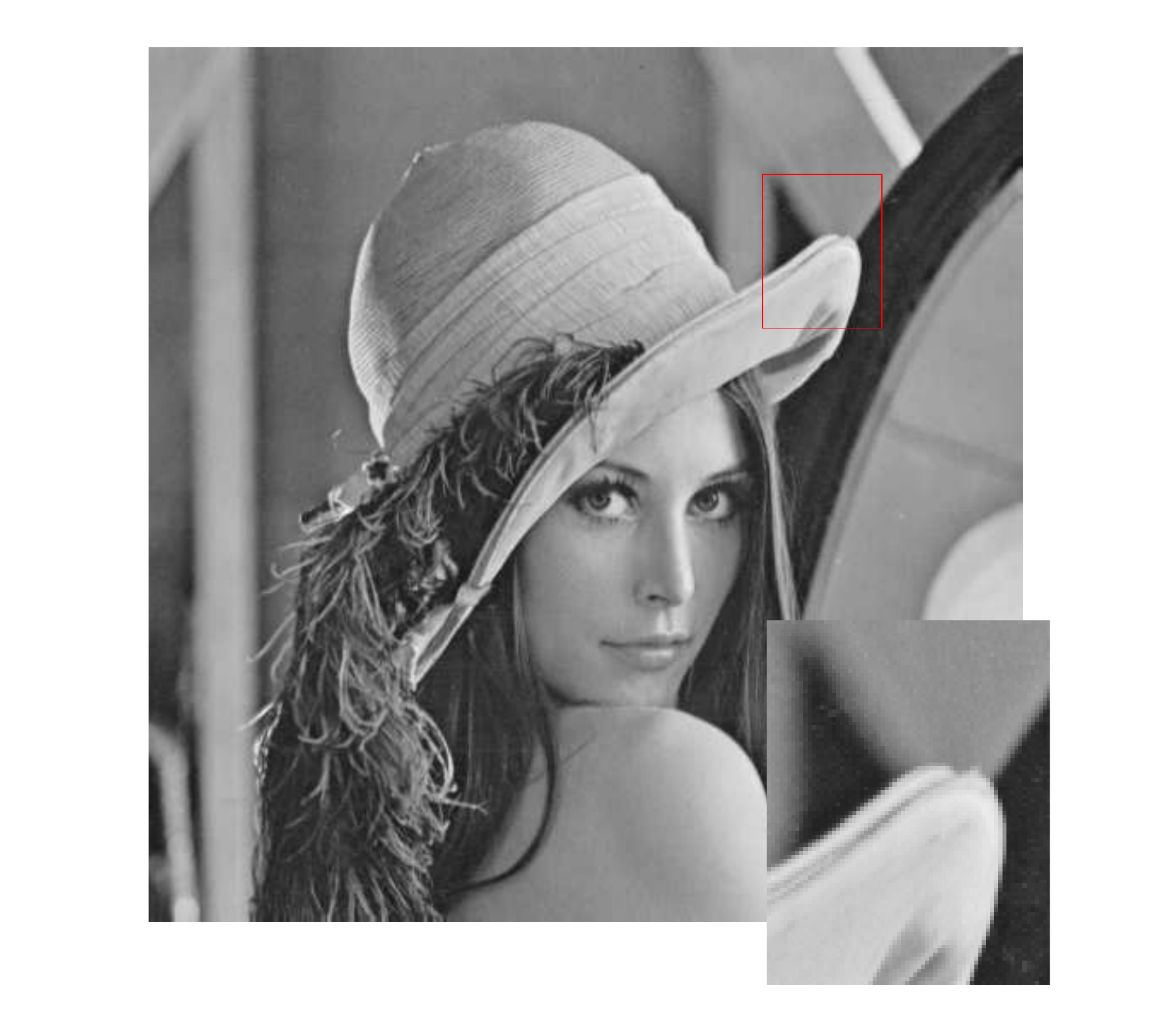}
	}
	\subfigure[`Mandrill']{\includegraphics[width=0.23\textwidth]{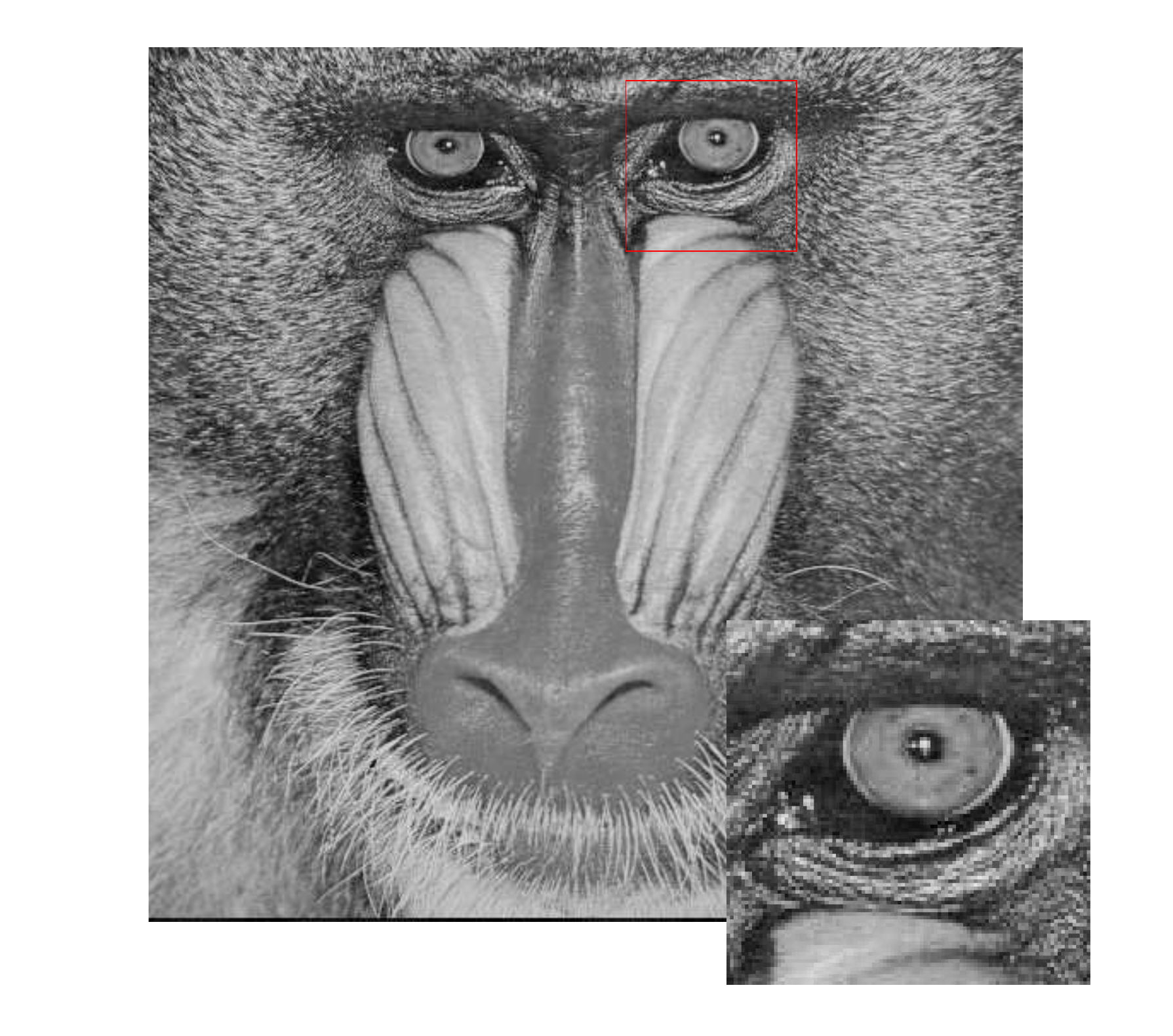}
	}
	\caption{The original testing images.}\label{Lena:Mandril:original}
\end{figure}
\begin{figure}[!htb]
	\centering
	\subfigure[$CS_{S-DCS}-small$]{\includegraphics[width=0.23\textwidth]{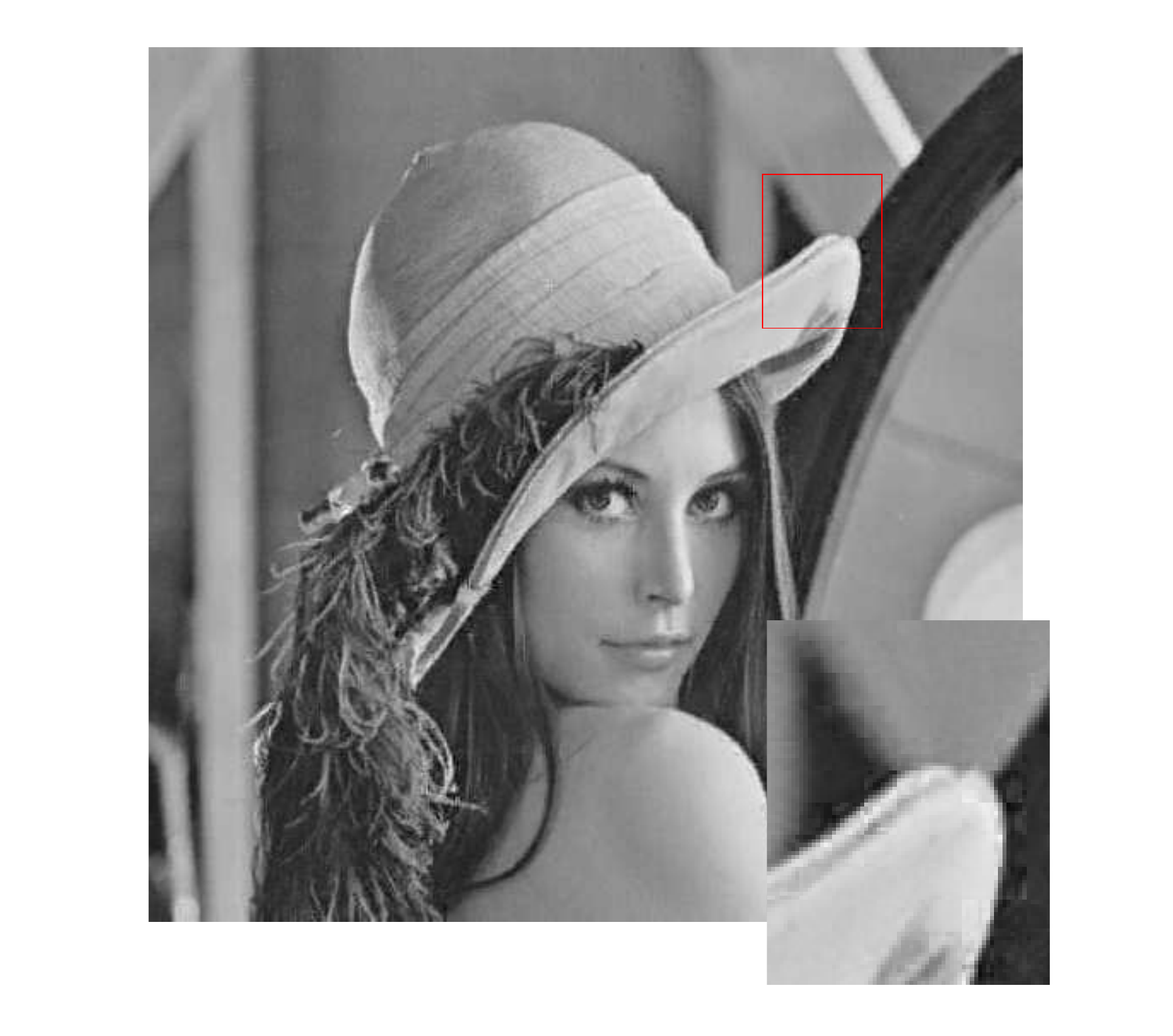}
 	}
	\subfigure[$CS_{S-DCS}$]{\includegraphics[width=0.23\textwidth]{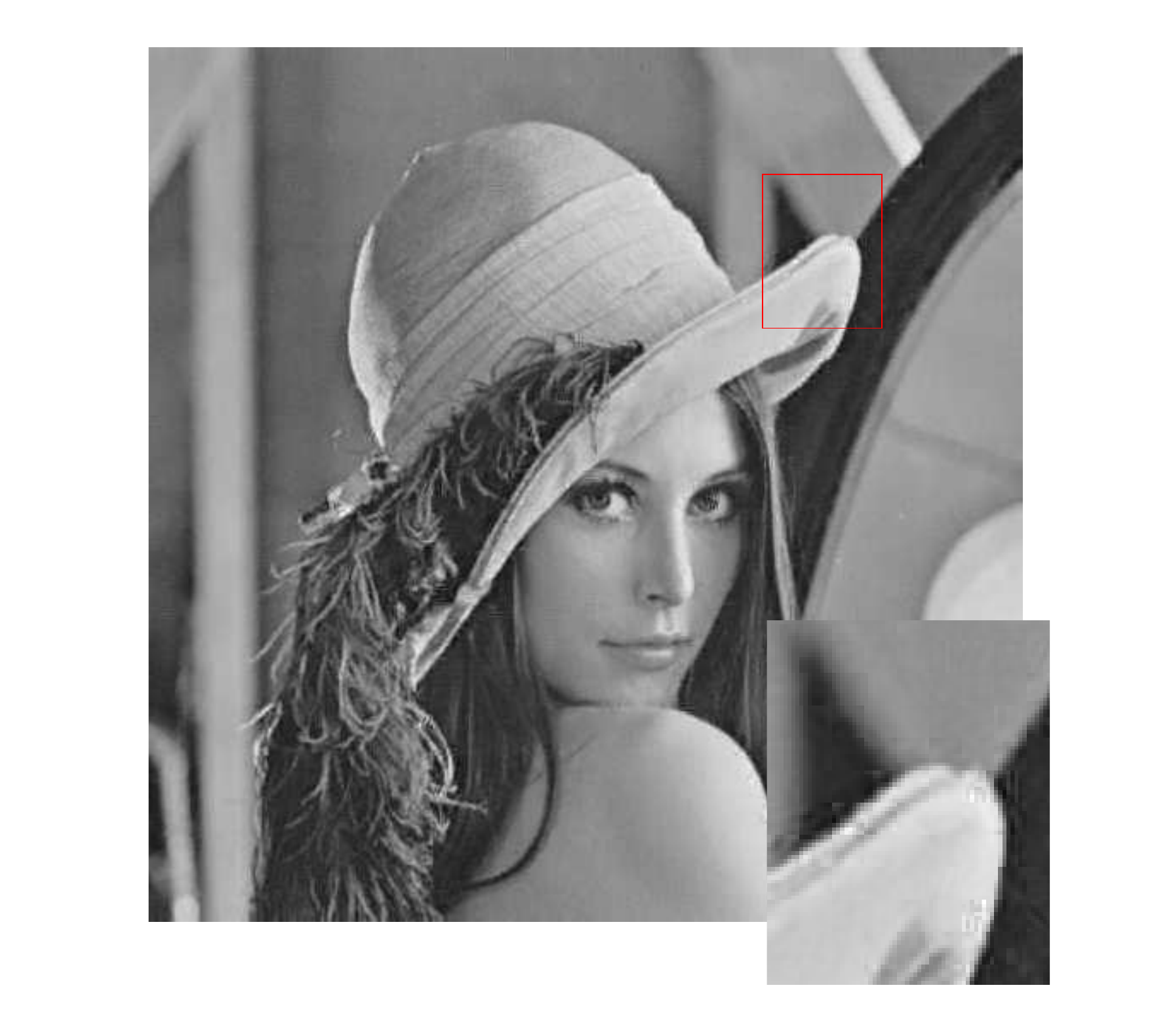}
	}
	\subfigure[$CS_{BL}$]{\includegraphics[width=0.23\textwidth]{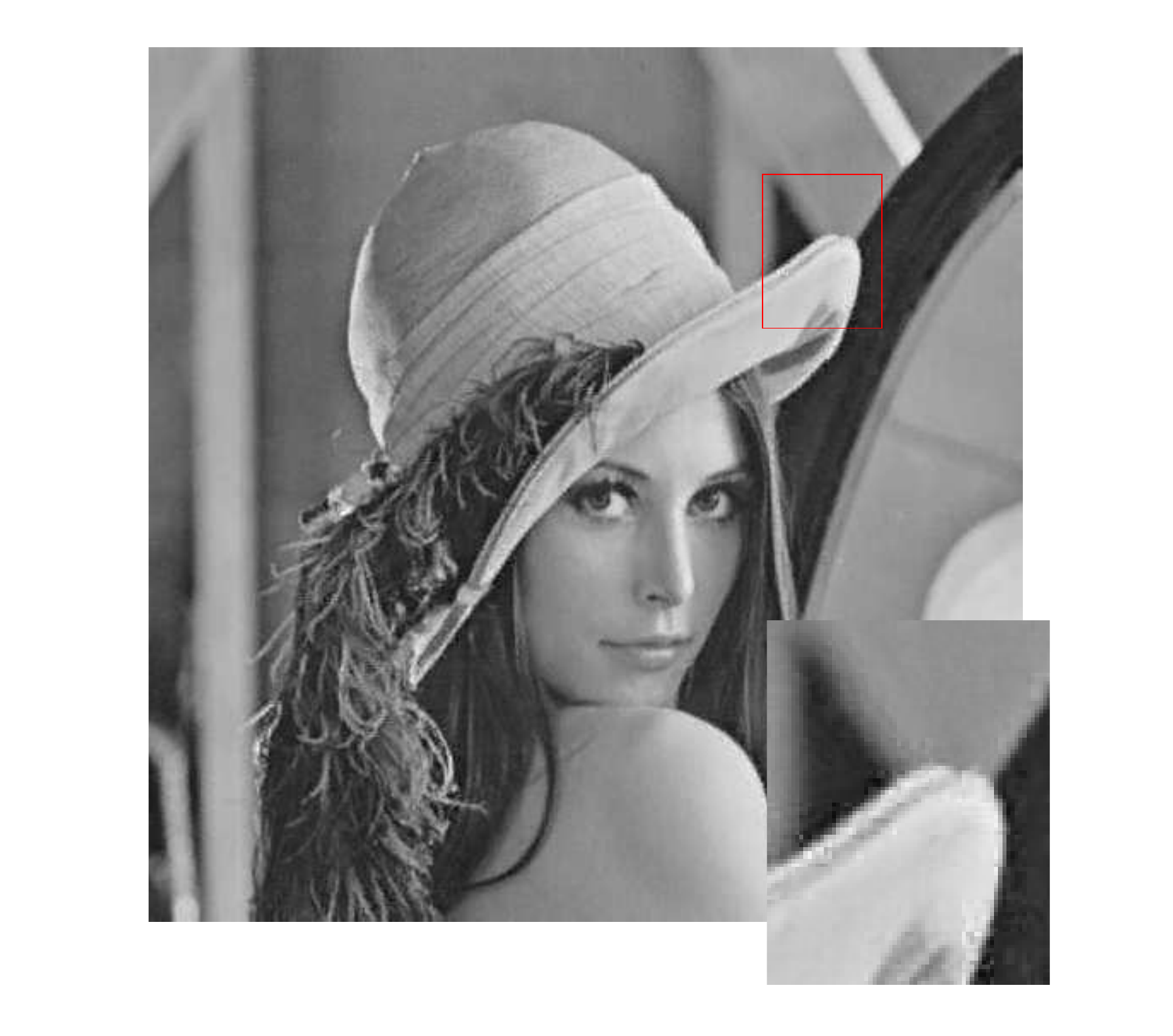}
	}
	\subfigure[$CS_{Alg3}$]{\includegraphics[width=0.23\textwidth]{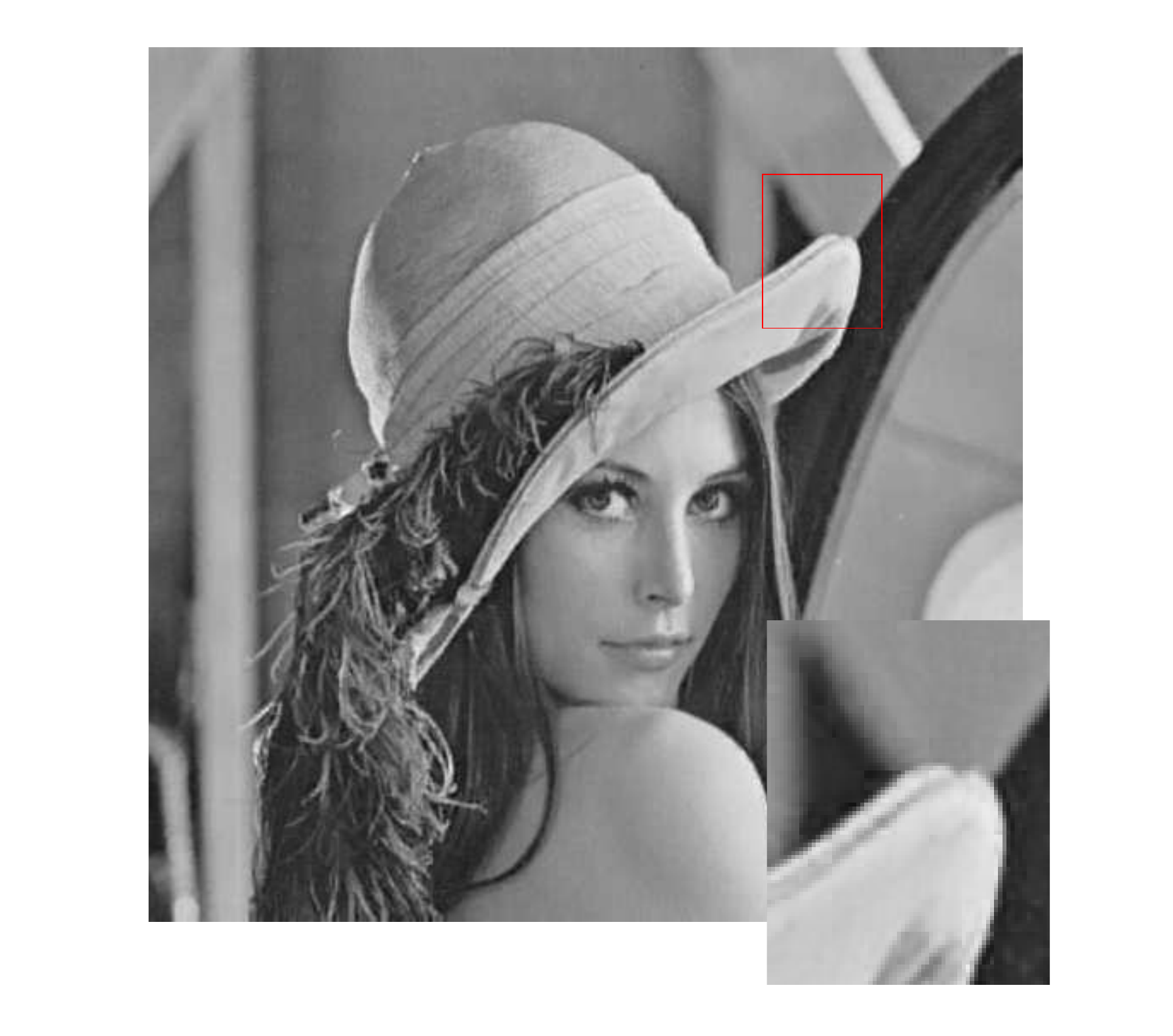}
	}
	\caption{The recovered testing image `Lena'.} \label{Lena:reconst}
\end{figure}

\begin{figure}[!htb]
	\centering
	\subfigure[$CS_{S-DCS}-small$]{\includegraphics[width=0.23\textwidth]{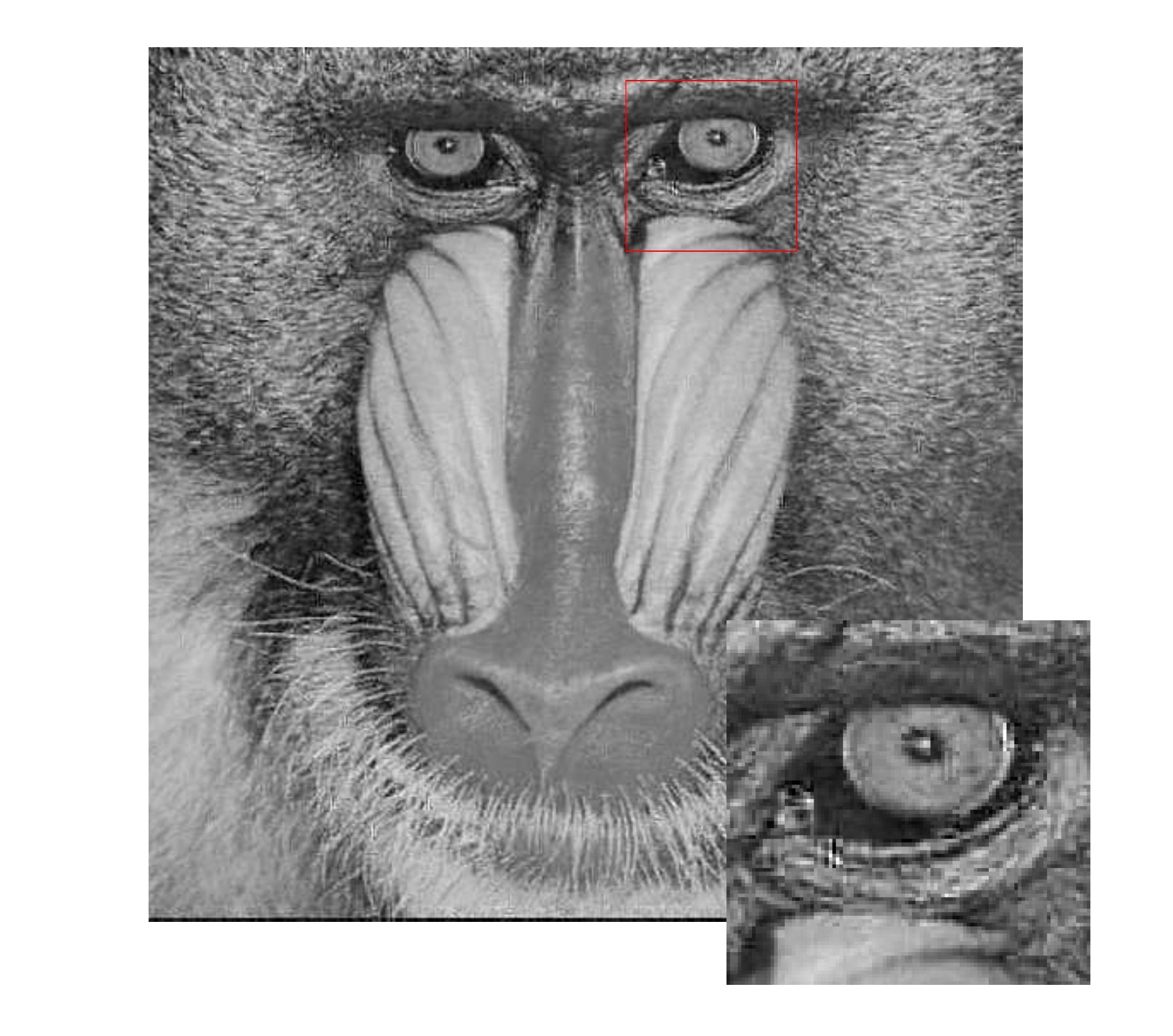}
	}
	\subfigure[$CS_{S-DCS}$]{\includegraphics[width=0.23\textwidth]{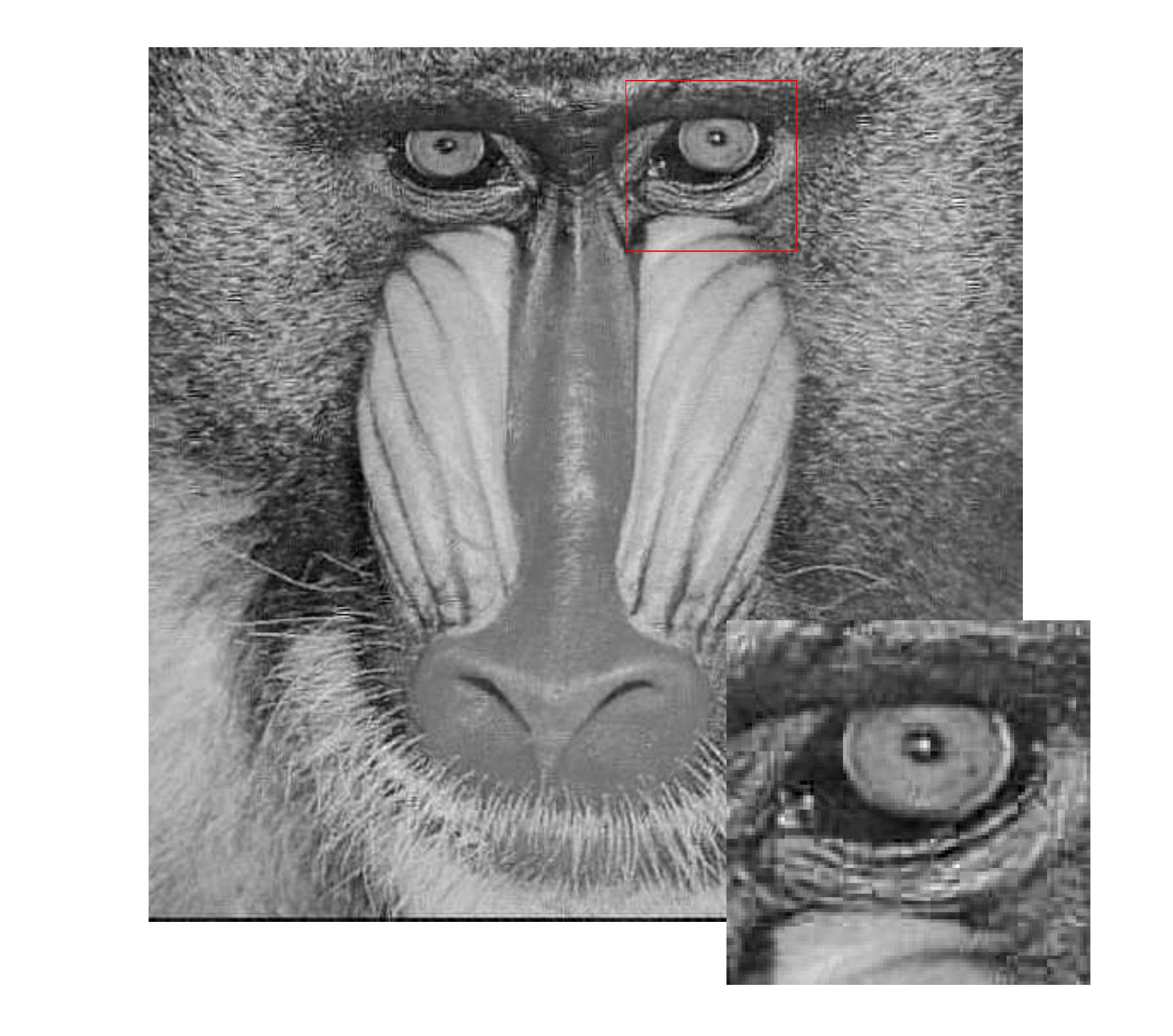}
	}
	\subfigure[$CS_{BL}$]{\includegraphics[width=0.23\textwidth]{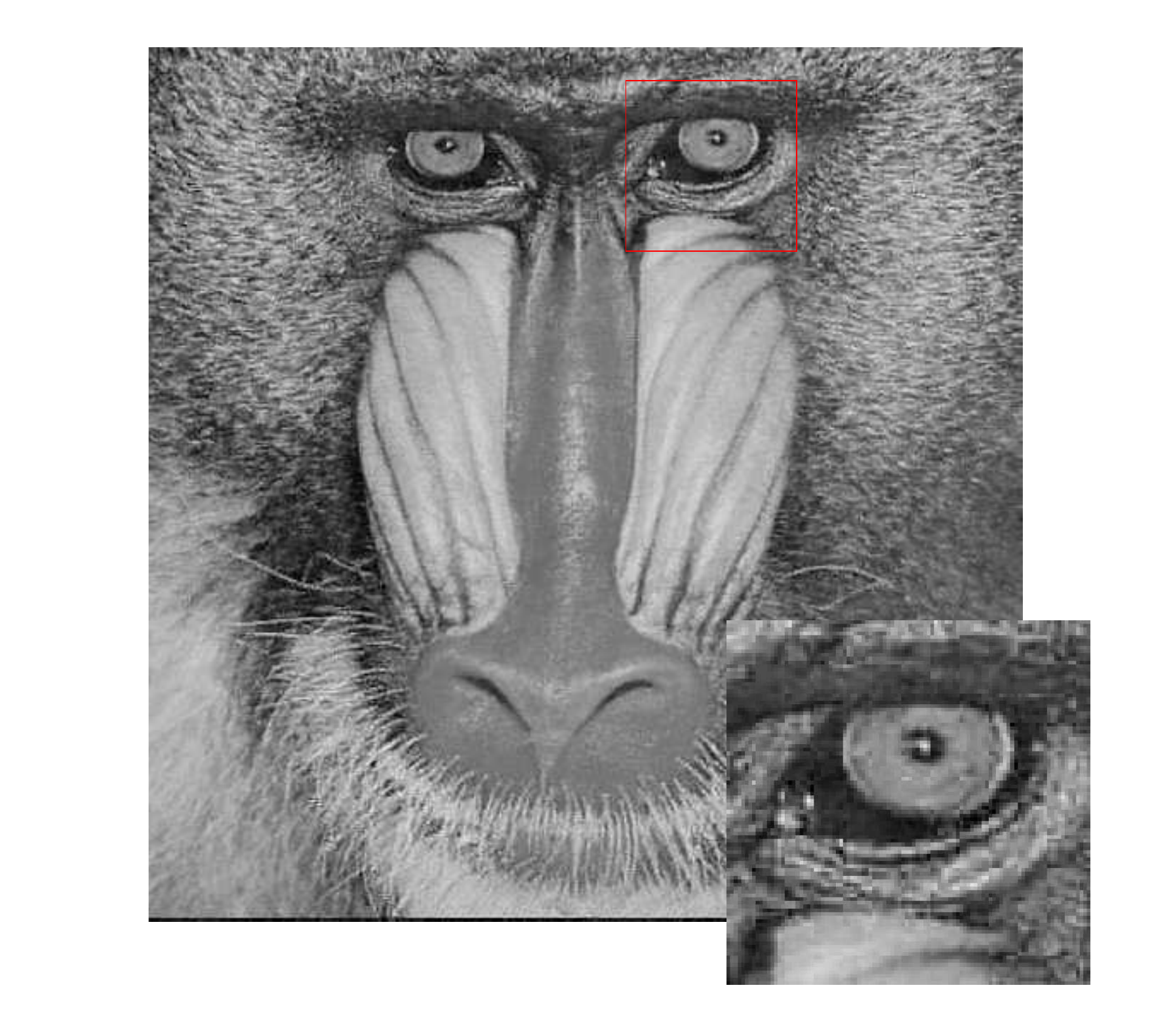}
	}
	\subfigure[$CS_{Alg3}$]{\includegraphics[width=0.23\textwidth]{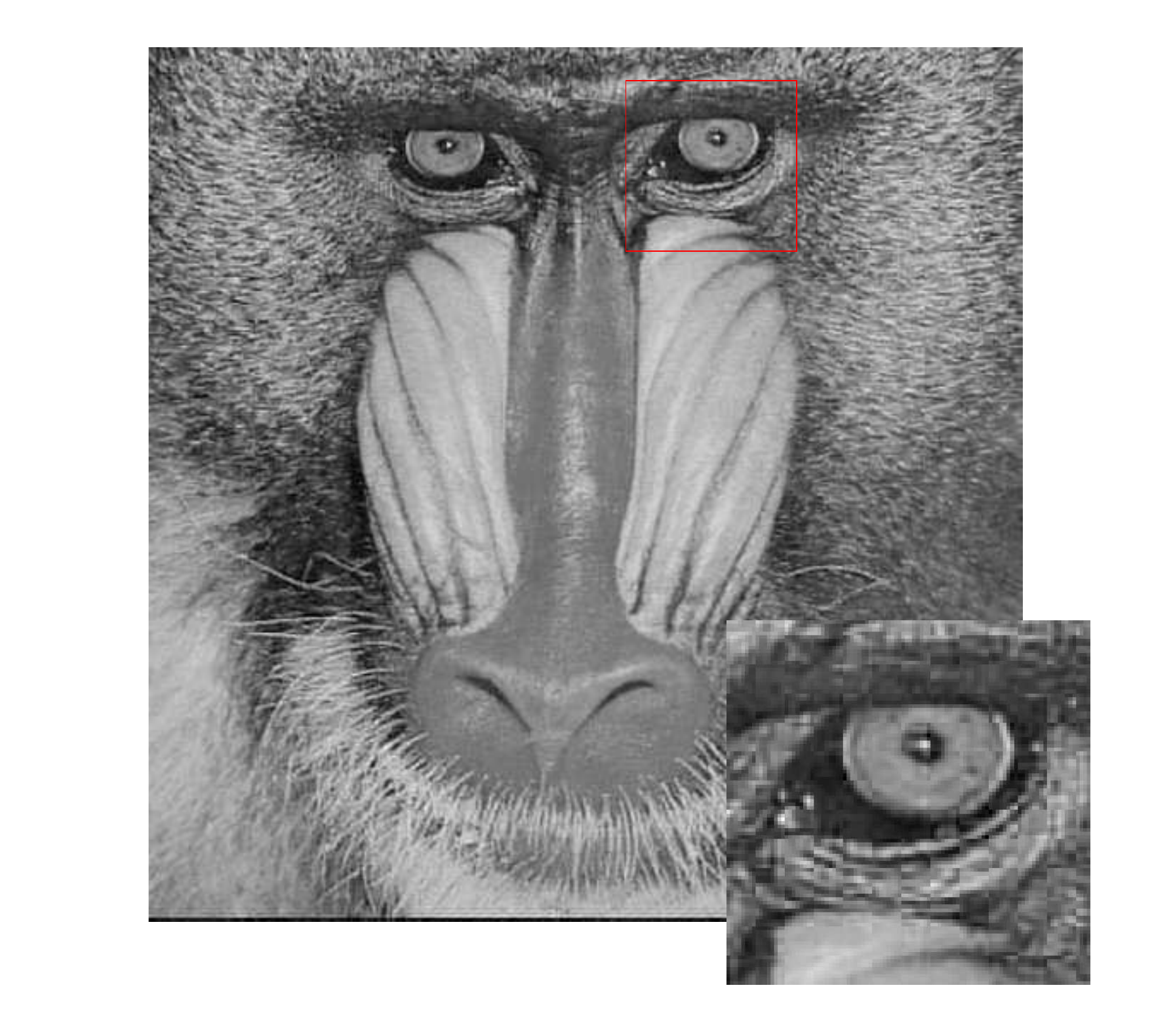}
	}
	\caption{The recovered testing image `Mandrill'.}\label{Mandril:reconst}
\end{figure}

Now, we come to experimentally check the convergence of our proposed algorithm. In this experiment, we run another sufficient large iterations on dictionary updating after running {\bf Algorithm \ref{Alg_joint_projection_dictionary}} to see whether the dictionary can converge. The testing error versus iteration on testing data is shown in Fig. \ref{Convergence:experiment}.\footnote{Although we train our $\bm\Phi$ and $\bm\Psi$ on training data, we only care about the performance on the testing data. So we prefer to see the value of objective function on testing data. Note that the iteration here refers to the total of $Iter_{dic}$ as shown in {\bf Algorithm \ref{Alg_dictionary_involve_projection}}.}  Clearly, even if the testing error is not monotonically decreasing, it is asymptotically decreasing which meets the property of our online algorithm ({\bf Proposition \ref{prop:decrease_func}}) because we randomly sample part of the training data to update the dictionary at each iteration. We can also observe that the recovery accuracy in terms of $\varrho_{psnr}$ on testing data is also increasing along the number of iterations growing. {As seen from the sub-figure in Fig. \ref{Convergence:experiment}, the recovered PSNR increases dramatically after the $1000$-th iteration, in which we update the sensing matrix again. Moreover, we see that the PSNR still increases as the iteration goes, which demonstrates the significance of our algorithm to simultaneously optimize the sensing matrix and the dictionary.}  {We also display the difference of the dictionary between each iteration in Fig. \ref{Convergence:Diff_dic}. As observed from Fig. \ref{Convergence:Diff_dic:a}, there exist many oscillations which are caused by the fact that we update the dictionary through the stochastic method which only utilizes a part of training data in each iteration. If we check Fig. \ref{Convergence:Diff_dic:b}, the envelop of Fig. \ref{Convergence:Diff_dic:a}, we see it is convergent and coincides with the {\bf Proposition \ref{prop:dic_dif}} that the stationary point can be attained.} Note that all of the observations meet our previous statements in Section \ref{S_3} regarding the convergence analysis of the proposed {\bf Algorithm \ref{Alg_joint_projection_dictionary}}. However, the whole investigation of the convergence analysis for {\bf Algorithm \ref{Alg_joint_projection_dictionary}} is out of the scope in this paper and belongs to future work.

\begin{figure}[!htb]
	\centering
	\includegraphics[width=0.5\textwidth]{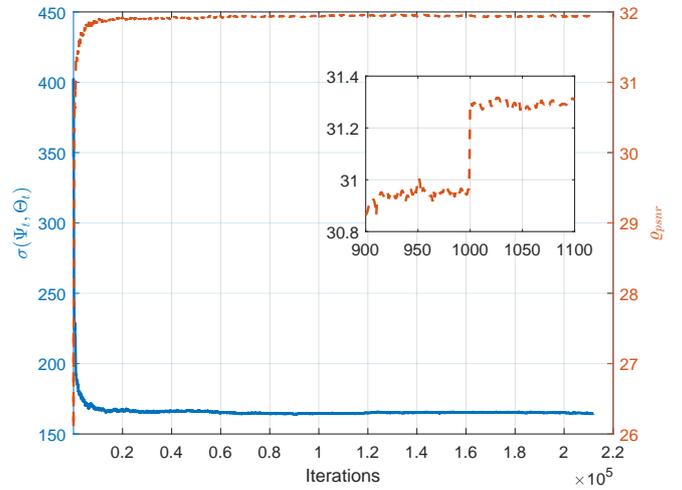}\\
	\caption{Objective value $\sigma(\bm\Psi_t,\bm\Theta_t)$ and $\sigma_{psnr}$ versus iteration on testing data through {\bf Algorithm $3$}.}\label{Convergence:experiment} 
\end{figure}

\begin{figure}[!htb]
	\centering
	\subfigure[The difference between each dictionary versus iteration .]{\includegraphics[width=0.4\textwidth]{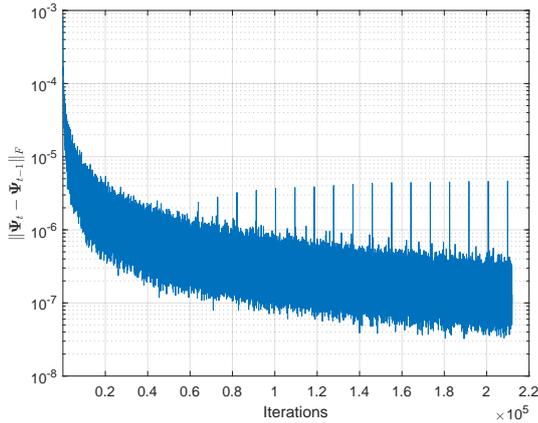}\label{Convergence:Diff_dic:a}}
	\subfigure[The envelop of the difference between each dictionary versus iteration.]{\includegraphics[width=0.4\textwidth]{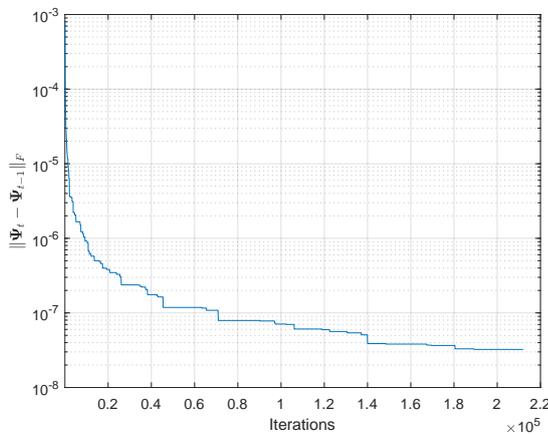}\label{Convergence:Diff_dic:b}}
	\caption{The change of each dictionary in each iteration.}\label{Convergence:Diff_dic} 
\end{figure}

\section{Conclusion}\label{S_5}
In this paper, an efficient algorithm for jointly learning the SMSD on a large dataset is proposed. The proposed algorithm optimizes the sensing matrix with a closed-form solution and learns a sparsifying dictionary with a stochastic method on a large training dataset. Our experiment results show that training the SMSD on a large dataset yildes a better performance and the proposed method which considers the efficiency and effectiveness simultaneously is a suitable choice for such a task.

One of the possible directions for future research is to develop an accelerated algorithm to make the proposed method more efficient. Involving the Sequential Subspace Optimization (SESOP) in the algorithm may belong to one of the possible methods to realize the accelerated purpose \cite{RHGZ16}.

\section*{Acknowledgment}
This research is supported in part by ERC Grant agreement no. 320649, and in part by the Intel Collaborative Research Institute for Computational Intelligence (ICRI-CI). The code
in this paper to represent the experiments can be downloaded through the link {https://github.com/happyhongt/}.


\end{document}